\newtheorem{proposition}{Proposition}
\newtheorem{lemma}{Lemma}
\title{LoD: Loss-difference OOD Detection by Intentionally Label-Noisifying Unlabeled Wild Data}
\author{
Chuanxing Geng$^{1,2,3}$
\and
Qifei Li$^{1}$\and
Xinrui Wang$^{1}$\and
Dong Liang$^{1,3}$\and
Songcan Chen$^{1,3}$\And
Pong C. Yuen$^2$\footnote{Corresponding author}\\
\affiliations
$^1$College of Computer Science and Technology, Nanjing University of Aeronautics and Astronautics\\
$^2$Department of Computer Science, Hong Kong Baptist University\\
$^3$MIIT Key Laboratory of Pattern Analysis and Machine Intelligence\\
\emails
\{gengchuanxing, liqifei, wangxinrui, liangdong, s.chen\}@nuaa.edu.cn,
pcyuen@comp.hkbu.edu.hk
}
\begin{document}

\maketitle

\begin{abstract}
    Using unlabeled wild data containing both in-distribution (ID) and out-of-distribution (OOD) data to improve the safety and reliability of models has recently received increasing attention. Existing methods either design customized losses for labeled ID and unlabeled wild data then perform joint optimization, or first filter out OOD data from the latter then learn an OOD detector. While achieving varying degrees of success, two potential issues remain: (i) Labeled ID data typically dominates the learning of models, inevitably making models tend to fit OOD data as IDs; (ii) The selection of thresholds for identifying OOD data in unlabeled wild data usually faces dilemma due to the unavailability of pure OOD samples. To address these issues, we propose a novel loss-difference OOD detection framework (LoD) by \textit{intentionally label-noisifying} unlabeled wild data. Such operations not only enable labeled ID data and OOD data in unlabeled wild data to jointly dominate the models' learning but also ensure the distinguishability of the losses between ID and OOD samples in unlabeled wild data, allowing the classic clustering technique (e.g., K-means) to filter these OOD samples without requiring thresholds any longer. We also provide theoretical foundation for LoD's viability, and extensive experiments verify its superiority.
\end{abstract}

\section{Introduction}
The safety and reliability of traditional machine learning models often face challenges when deployed in real-world environments due to unexpected occurrence of out-of-distribution (OOD) data \cite{nguyen2015deep}. To meet this challenge, the OOD detection problem has been studied \cite{hendrycks2016baseline,yang2024generalized}, which requires the models not only predict the true class of in-distribution (ID) data but also effectively reject the OOD data. To date, numerous OOD detection methods have been developed \cite{liu2020energy,abati2019latent,wang2022vim,hendrycks2018deep,katz2022training}, and among them, the methods leveraging unlabeled wild data containing ID and OOD samples to improve the performance of OOD detection has recently received increasing attention \cite{katz2022training}. This mainly attributed to the fact that such data can be freely collected during the deployment of any machine learning model in its operational environment, while also allowing for the capture of the true test-time OOD distribution.

Despite the promise, harnessing the power of unlabeled wild data is non-trivial due to the heterogeneous mixture of ID and OOD samples. Existing methods either adopt a joint optimization strategy \cite{katz2022training} or a two-step strategy (i.e., filtering and learning) \cite{du2024does}. The former aims to design customized losses for labeled ID and unlabeled wild data to jointly optimize the models in a semi-supervised learning manner. The latter first filters out OOD samples from the unlabeled wild data using customized OOD score (usually based on labeled ID data), then uses them along with labeled ID data to learn an OOD detector. While achieving varying degrees of success, two potential issues of these methods remain:
\begin{itemize}
    \item[\checkmark] \textbf{The model-bias issue.} Labeled ID data typically dominates the model learning in both two strategies, especially for the two-step strategy, thus inevitably making the model tend to fit OOD data as IDs.
    \item[\checkmark] \textbf{Threshold selection dilemma.} The selection of thresholds for determining OOD samples in unlabeled wild data usually faces challenges due to the unavailability of pure OOD samples.
\end{itemize}

To address these issues, this work proposes a novel loss-difference OOD detection framework (abbreviated as LoD) by \textit{intentionally label-noisifying} unlabeled wild data. LoD adopts the filtering and learning strategy and its key lies in the loss-difference filtering module with \textit{intentional label-noises}. In this module, the whole unlabeled wild data is intentionally labeled as a single $K+1$-th class (assuming that ID data contain $K$ classes), and then trained together with the labeled ID data through the \textit{fully-supervised} manner of $K+1$ classification. We would like to emphasize that such operations ingeniously transform the OOD filtering problem in unlabeled wild data into a label-noise learning problem, allowing us to solve the aforementioned issues by leveraging the inherent properties in label-noise learning. In this way, the OOD samples in unlabeled wild data is intentionally transformed into \textit{label-clean} samples, while the ID counterparts become \textit{label-noise} ones. The former naturally and seamlessly enables OOD samples in unlabeled wild data to jointly dominate the model learning with the labeled ID data, effectively addressing the model-bias issue. Meanwhile, the latter provides the key clues for differentiating ID and OOD samples in unlabeled wild data due to the significant differences in the loss curves between ID (\textit{label-noise}) and OOD (\textit{label-clean}) samples during training.

\begin{figure}[]
  \centering
  \subfigure{\includegraphics[width=4.23cm, height=3.6cm]{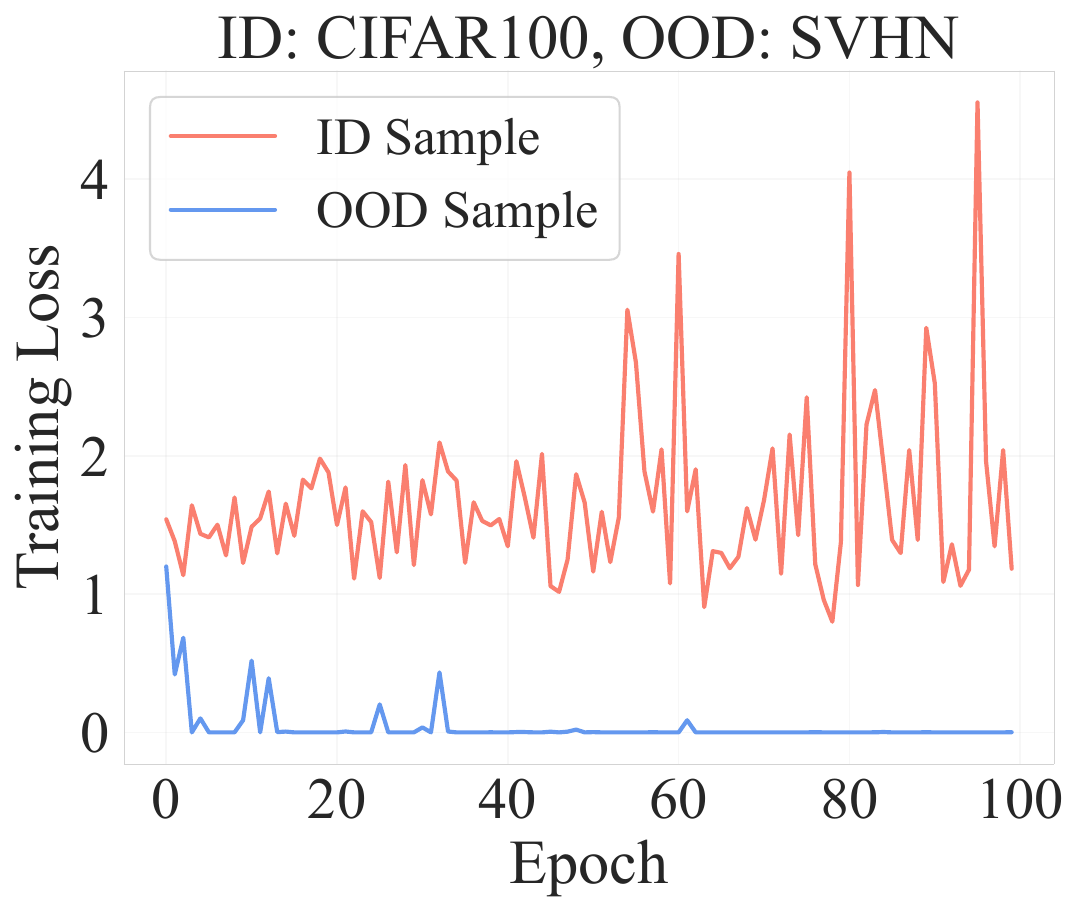}}
  \hfill
  \subfigure{\includegraphics[width=4.23cm, height=3.6cm]{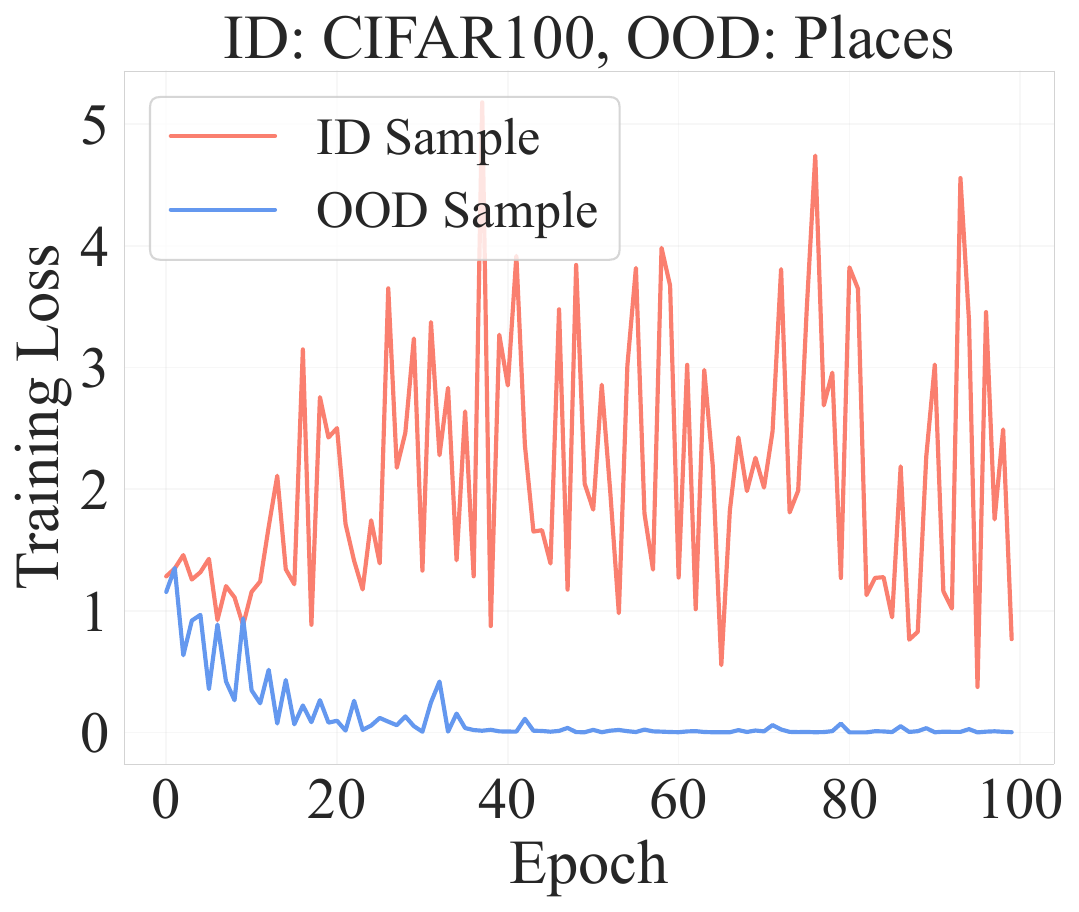}}
  
  \caption{The cross-entropy loss changes of ID (\textit{label-noise}) and OOD (\textit{label-clean}) samples in unlabeled wild data when they are intentionally labeled as $K+1$-th class. These two types of samples typically exhibit different loss curves due to the differences in how learning progresses for each.
  }
  \label{fig:fig1}
\end{figure}

As shown in Figure \ref{fig:fig1}, as the OOD samples in the unlabeled wild data are correctly labeled (\textit{label-clean}), the model fits them well as learning progresses, leading to a gradual decrease and convergence of the loss curve. In contrast, the corresponding ID part, due to being incorrectly labeled (\textit{label-noise}) and conflicting with the originally labeled ID data, exhibits not only higher loss values but also larger fluctuations during training. Such significant and natural differences allow us to employ classic clustering models, like K-means, to filter these OOD samples without requiring thresholds any longer. In particular, we also provide theoretical foundation to support the viability of such a module. Overall, our contributions can be highlighted as follows:
\begin{itemize}
    \item Two potential issues (i.e., the model-bias issue and threshold selection dilemma) in this OOD research line are identified, providing some new insights for the subsequent modeling of OOD detection.
    \item The OOD filtering problem in unlabeled wild data is elegantly reformulated as a label-noise learning problem, leading to a novel LoD OOD detection framework, which not only effectively addresses the model-bias issue but also circumvents the threshold selection dilemma.
    \item Theoretical foundation is provided to support the viability of LoD. Meanwhile, extensive experiments are also conducted to demonstrate its superiority. 
\end{itemize}


\section{Related Works}
\subsection{Out-of-Distribution Detection}
To improve the safety and reliability of models in detecting OOD data, various OOD methods have been developed \cite{zhu2023diversified,zheng2023out,wang2023out,yang2024generalized,li2024learning,behpour2024gradorth,fang2024learnability,sharifi2025gradient}, including adopting the classification confidence or entropy, modeling the ID density, leveraging auxiliary OOD data, and more. Among these, methods using auxiliary OOD data have demonstrated encouraging OOD detection performance over the counterpart without auxiliary data \cite{lee2017training,bevandic2018discriminative,malinin2018predictive,liu2020energy,chen2021atom,wei2022mitigating,du2022unknown,wang2023learning,sharifi2025gradient}. Despite the promise, there are two primary limitations: First, such data may not match the true distribution of OOD data in the wild; Second, collecting such data can be labor-intensive and inflexible. To address these limitations, recent works \cite{katz2022training,du2024does} proposed to leverage the unlabeled "in-the-wild" data due to they are freely collected during the deployment of any machine learning model in its operational environment, while also allowing for the capture of the true test-time OOD distribution. 

Our work falls into this research line, and as mentioned earlier, though the methods in this research line have achieved varying degrees of success, they still face two potential weaknesses, i.e., the model-bias issue and the threshold selection dilemma. These motivate us to seek new methods to address these issues.

\subsection{Training Neural Networks with Label Noises}
In many applications \cite{guan2018said}, due to the cost or difficulty of manual labeling, datasets are often annotated through online queries \cite{yuan2024combating} or crowdsourcing \cite{li2024certainty}. Such annotations inevitably contain numerous mistakes, i.e., label-noises. When trained on the data mixed clean labels and noise labels, deep neural networks have been observed to first fit label-clean data during an early learning phase, and then start memorizing the label-noise data after sufficient epochs of training \cite{liu2020early}. This phenomenon is independent of the optimizations used during training or the architectures of neural networks employed \cite{arpit2017closer}. In particular, during the early learning phase, label-clean and label-noise data will have different loss curves due to the difference in how learning progresses for each type. This has been exploited in many label-noise learning works \cite{forouzesh2022leveraging,li2023disc,yuan2024early,linlearning,lienen2024mitigating,yue2024ctrl}. For more information, please refer to the recent review work \cite{song2022learning}.

In this paper, we propose a novel loss-difference OOD detection framework by \textit{intentionally label-noisifying} unlabeled wild data, which interestingly transforms the OOD filtering problem in unlabeled wild data into a label-noise learning problem. This enables us to leverage the aforementioned inherent phenomenon of label-noise learning to effectively filter OOD data from the unlabeled wild data.

\begin{figure*}[]
  \centering
  \includegraphics[width=17.3cm,height=4cm]{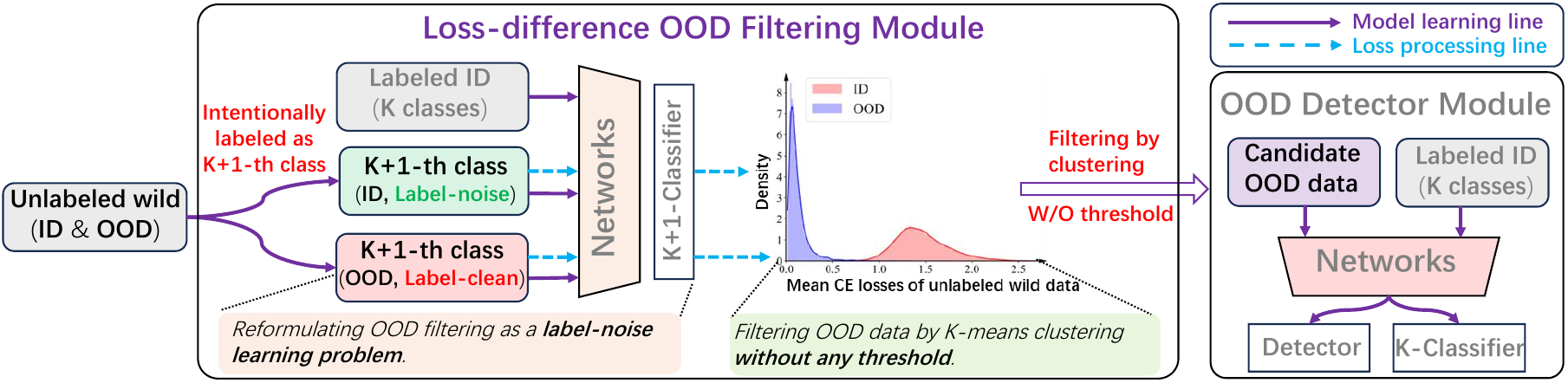}
  \label{fig:sensitivity}
  \caption{Overview of the loss-difference OOD detection framework by intentionally label-noisifying unlabeled
wild data.
  }
  \label{fig:fig2}
\end{figure*}

\section{Methodology}
\subsection{Problem Formulation}
\paragraph{Labeled ID Data} 
Let $\mathcal{X}$ denote the input space and $\mathcal{Y}=\{1,...,K\}$ represent the label space. Let $\mathcal{D}_{\text{in}}^{\text{train}}=\{(\bm{x}_i,y_i)\}_{i=1}^n$ denote the labeled training set drawn independently and identically from $\mathbb{P}_{\mathcal{XY}}$. $\mathbb{P}_{\text{in}}$ is the marginal distribution of $\mathbb{P}_{\mathcal{XY}}$ on $\mathcal{X}$, which is also referred to as the ID distribution.
\paragraph{Unlabeled Wild Data} 
The main challenge in OOD detection is the lack of labeled OOD data. In particular, the sample space for potential OOD data can be prohibitively large, making it expensive to collect labeled OOD data. To model the realistic environment, recent works \cite{katz2022training,du2024does} incorporated unlabeled wild data $\mathcal{D}_{\text{wild}}=\{\widetilde{\bm{x}}_1,...,\widetilde{\bm{x}}_m\}$ into OOD detection. Unlabeled wild data consists of potentially both ID and OOD data, and can be freely collected upon deploying an existing model in its natural habitats. Following \cite{katz2022training}, the Huber contamination model is employed to characterize the marginal distribution of the unlabeled wild data:
\begin{equation}
    \mathbb{P}_{\text{wild}} := (1-\pi)\mathbb{P}_{\text{in}} + \pi\mathbb{P}_{\text{out}},
\end{equation}
where $\pi\in(0,1]$, and $\mathbb{P}_{\text{out}}$ is the OOD distribution defined over $\mathcal{X}$.
\paragraph{Learning Goal}
The learning framework aims to build the OOD detector $g_\theta$ and the multi-class classifier $f_\theta$ by leveraging data from both $\mathcal{D}_{\text{in}}^{\text{train}}$ and $\mathcal{D}_{\text{wild}}$. Following \cite{du2024does}, we here are interested in the following measurements for model evaluation:
\begin{eqnarray*}
    &&\downarrow\text{FPR}(g_\theta):=\mathbb{E}_{\bm{x}\sim\mathbb{P}_{\text{out}}}(\mathds{1}\{g_\theta(\bm{x})=\text{in}\}),\\
    &&\uparrow\text{TPR}(g_\theta):=\mathbb{E}_{\bm{x}\sim\mathbb{P}_{\text{in}}}(\mathds{1}\{g_\theta(\bm{x})=\text{in}\})
\end{eqnarray*}


\subsection{Loss-Difference OOD Detection Framework}
To effectively address the two aforementioned potential issues, i.e., the model-bias issue and the threshold-selection dilemma, we innovatively propose a novel loss-difference OOD detection framework (abbreviated as LoD) by \textit{intentionally label-noisifying} unlabeled wild data. As shown in Figure 2, LoD follows the two-step strategy and contains two main modules, i.e., loss-difference OOD filtering module and OOD detector learning module. Next, we will elaborate on the specific details of each module.


\subsubsection{Loss-difference OOD Filtering Module}
In this part, a loss-difference filtering mechanism with \textit{intentional label-noises} is developed, which ingeniously reformulates the OOD filtering problem in unlabeled wild data as a label-noise learning problem with \textit{controllable label-noise ratio}. This allows us to leverage the inherent properties of label-noise learning demonstrated in Section 2.2 to effectively filter OOD data from the unlabeled wild data. 

In specific, we first intentionally label the whole unlabeled wild data as a single $K+1$-th class (assuming that ID data contains $K$ classes) and then train them together with labeled ID data in a \textit{fully-supervised manner of $K+1$ classification}, as follows:
\begin{equation}
\begin{aligned}
    &\mathcal{L} = \frac{1}{|\mathcal{B}_{\text{in}}^{\text{train}}|}\sum_{(\bm{x}_i,y_i)\sim\mathcal{B}_{\text{in}}^{\text{train}}}\ell(\hat{y_i},y_i) +  \\&\frac{1}{|\mathcal{B}_{\text{wild}}|}\sum_{(\bm{x}_i,y_i)\sim\mathcal{B}_{\text{wild}}}\ell(\hat{y_i},y_{K+1}), \ \  \ 
    \hat{y_i} = f(\bm{x}_i, \bm{\theta}),
\end{aligned}
\end{equation}
where $f(\cdot,\bm{\theta})\in \mathcal{F}$ denotes the $K+1$ classifier, $l(\cdot,\cdot)$ represents the vanilla cross-entropy (CE) loss. Each training batch consists of two parts: $\mathcal{B}_{\text{in}}^{\text{train}}$ and $\mathcal{B}_{\text{wild}}$, respectively sampled from labeled ID data and unlabeled wild data. Note that the ratio of $|\mathcal{B}_{\text{in}}^{\text{train}}|:|\mathcal{B}_{\text{wild}}|\geq 1$ is controllable. In fact, we indirectly control the label-noise ratio of the learning task by controlling this ratio (for more details, please refer to \underline{Section~4}). 

By labeling the entire unlabeled wild data as a single $K+1$-th class, the ID samples in $\mathcal{D}_{\text{wild}}$ are intentionally converted to \textit{label-noise} samples while the OOD samples in $\mathcal{D}_{\text{wild}}$ become \textit{label-clean} ones. According to the inherent phenomenon of early learning stage in label-noise learning, the loss curves of these two types of labeled samples will exhibit significant difference during the early learning stage, as shown in Figure \ref{fig:fig1}. This discrepancy provides us a critical clue for effectively distinguishing between them. Therefore, after training the $K+1$ classifier, we conduct clustering operations on the loss values of unlabeled wild data gained during training so as to filter the OOD samples from $\mathcal{D}_{\text{wild}}$, \textit{which does not need the filtering thresholds any more}. Particularly, clustering in our case has a well-defined number of clusters -- Two -- corresponding to the ID and OOD clusters represented by their distinct loss behaviors throughout the training process, e.g., higher loss-values for ID data while lower counterparts for OOD data.


Considering the efficiency issue, we here utilize the mean of loss-values during training as the new features for each sample in $\mathcal{D}_{\text{wild}}$. Then the classic K-means clustering technique is employed to achieve the OOD samples filtering from unlabeled wild data. Let $\mu_1$, $\mu_2$ ($\mu_1>\mu_2$) respectively denote the ID and OOD cluster centers, while $d_1$, $d_2$ respectively denote the distances between the corresponding sample and the two cluster centers. We filter the OOD samples from the unlabeled wild data by the following rule:
\begin{equation}
    \hat{y} = \left\{
    \begin{array}{ll}
       \text{ID data}, \ \ \text{if} \ \ d_1<d_2,   &  \\
       \text{OOD data}, \ \  \text{otherwise}. & 
    \end{array}
    \right.
\end{equation}

\begin{figure}[]
  \centering
  \subfigure{\includegraphics[width=4cm, height=3.2cm]{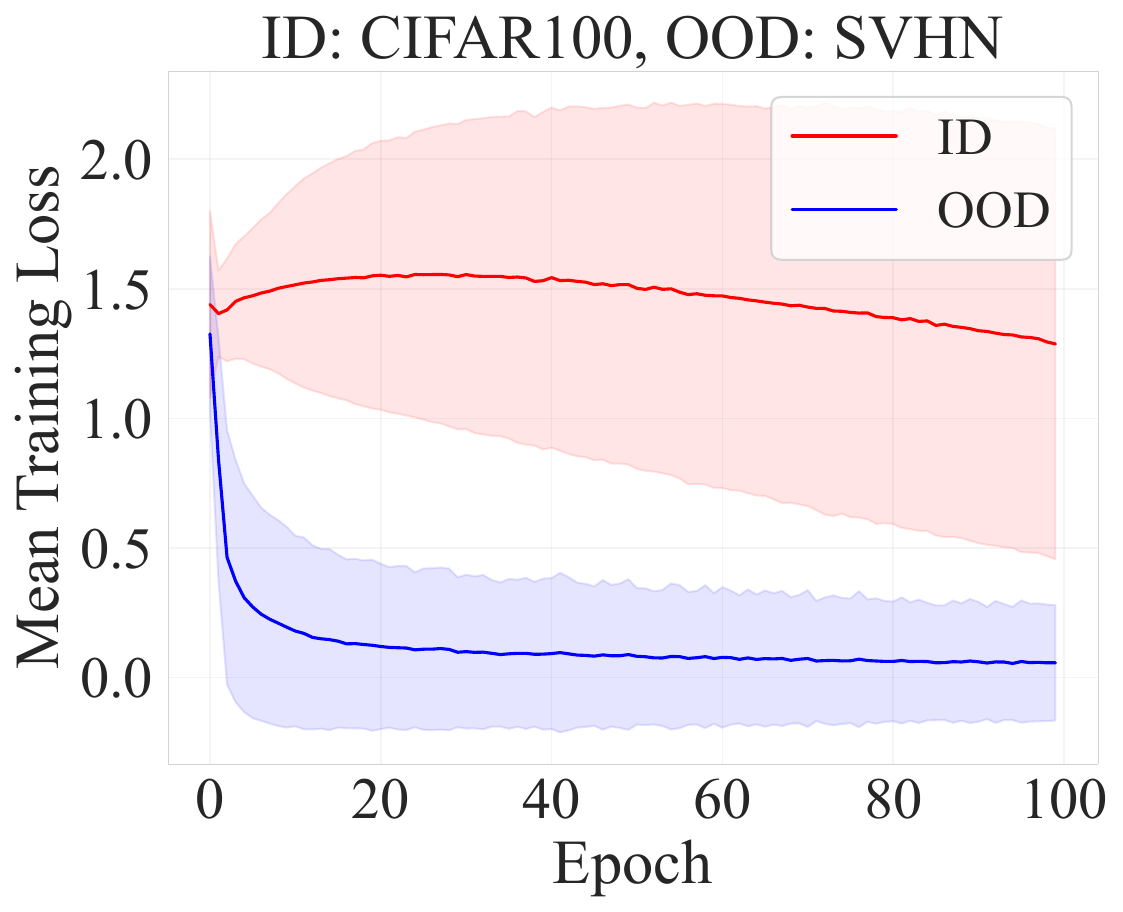}}
  \hfill
  \subfigure{\includegraphics[width=4cm, height=3.2cm]{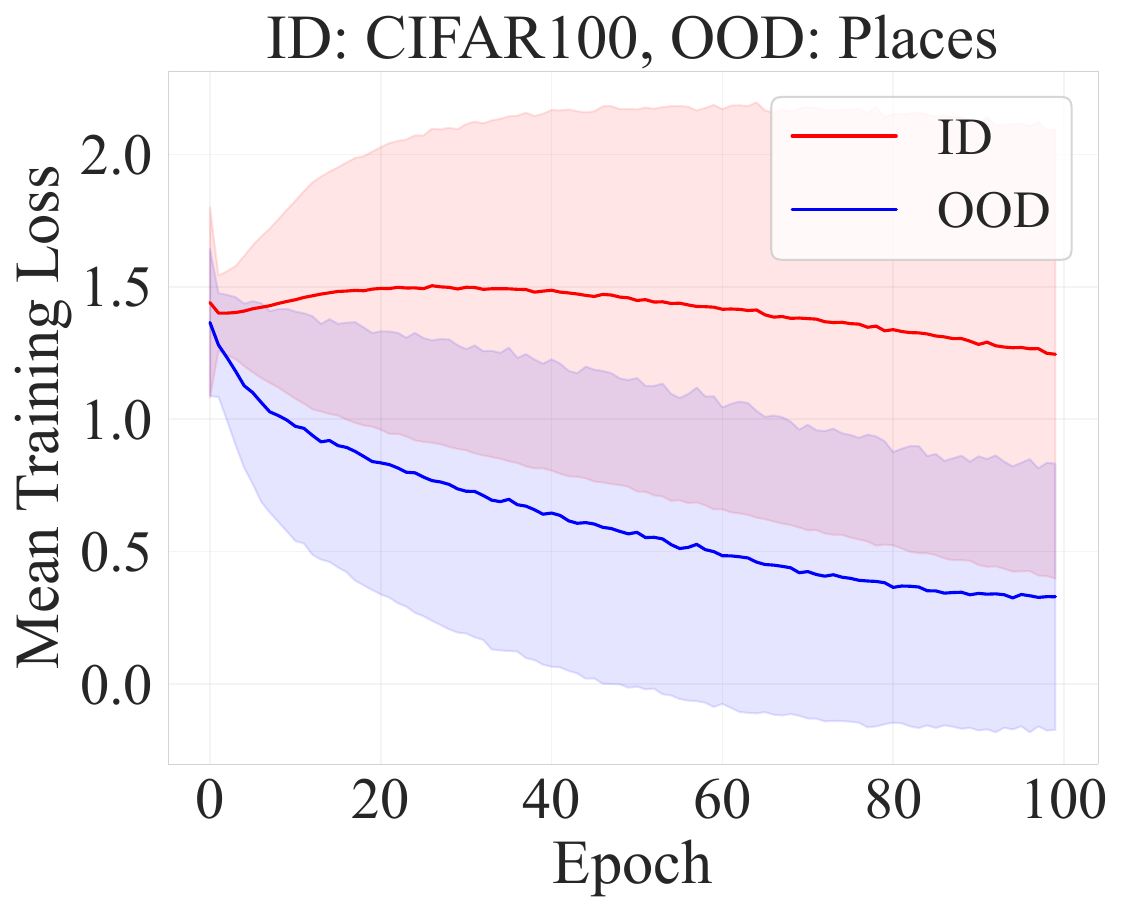}}
  \caption{The mean cross-entropy loss curves respectively for all ID (\textit{label-noise}) and OOD samples (\textit{label-clean}) in unlabeled wild data when they are intentionally labeled as $K+1$-th class.
  }
  \label{fig:fig3}
\end{figure}

\paragraph{Remark.} At first glance, labeling the entire set of unlabeled wild data as a single $K+1$-th class seems potentially to undermine the model learning.  Intriguingly, however, once we switch to consider filtering the OOD samples from the label-noise perspective, such operations, just on the contrary, bring at least the following three-fold advantages:
\begin{itemize}
    \item \textbf{First}, OOD samples in unlabeled wild data are correctly labeled (\textit{label-clean}), naturally and seamlessly enabling them to jointly dominate the model learning with labeled ID data, thus effectively circumventing the model-bias issue.
    \item \textbf{Second}, as mentioned earlier, the ID samples in $\mathcal{D}_{\text{wild}}$ being erroneously labeled (\textit{label-noise}) as $K+1$-th class contradict the label-correct ones in labeled ID data $\mathcal{D}_{\text{in}}^{\text{train}}$, thereby resulting in their loss curves exhibiting both higher values and greater fluctuations compared to those of OOD samples, such as their mean-loss curves shown in Figure \ref{fig:fig3}. This discrepancy provides us a fairly clear signal to distinguish ID and OOD samples in the unlabeled wild data.
    \item \textbf{Third}, our LoD is data-centric in nature, wherein we just relabel the unlabeled wild data as the intentionally $K+1$-th class without any modifications to the network architectures we employed. This endows our LoD stronger applicability (for related experiments, please refer to Appendix E).  
\end{itemize}

To solidly demonstrate the viability of this OOD filtering module, we also provide the theoretical analyses to support our first two claims, which will be detailed in Section 4.


\subsubsection{OOD Detector Learning Module}
After obtaining the candidate OOD samples $\mathcal{D}_{\text{out}}$ from the unlabeled wild data, we training an OOD detector $g_\theta$ using them together with labeled ID data $\mathcal{D}_{\text{in}}^{\text{train}}$. Similar to \cite{du2024does}, we adopt the following optimization objective:
\begin{equation}
    \mathcal{L}(g_\theta) = \mathbb{E}_{\bm{x}\in\mathcal{D}_{\text{in}}^{\text{train}}}\mathds{1}\{g_\theta(\bm{x})\leq0\} + \mathbb{E}_{\widetilde{\bm{x}}\in\mathcal{D}_{\text{out}}}\mathds{1}\{g_\theta(\widetilde{\bm{x}})>0\},
\end{equation}
where the binary sigmoid loss is employed as the smooth approximation of the $0/1$ loss to make it tractable. In addition, a $K$-class classifier $f_\theta$ is also trained using CE loss on labeled ID data along with $g_\theta$ to ensure the ID accuracy. Algorithm 1 denotes the entire workflow of our LoD.

\begin{algorithm}
\label{alg:algorithm1}
\newcommand{\MyComment}[1]{\textcolor{gray}{\textit{#1}}}
\footnotesize
\caption{LoD OOD Detection Framework}
\begin{algorithmic}[1]

    \renewcommand{\algorithmicrequire}{\textbf{Input:}}
    \renewcommand{\algorithmicensure}{\textbf{Output:}}
    
    \REQUIRE In-distribution data $\mathcal{D}^{train}_{in}$, unlabeled wild data $\mathcal{D}_{\text{wild}}$, Max Epoch $T$, Batch size $|\mathcal{B}|$.
    \ENSURE OOD detector $g_\theta$ and classifier $f_\theta$. 
    
    \STATE \MyComment{\# Loss-difference OOD detection module} 

    \STATE \textbf{Initializing:} Model parameters $\bm{\theta}$, $\mathcal{D}_{\text{wild}}$ labeled as $K+1$-th class, loss record matrix $\mathcal{V}=\{\}\in \mathbb{R}^{|\mathcal{D}_{\text{wild}}| \times T}$. 
    
    \FOR{$epoch=1$ to $T$}
        \STATE Batch $\mathcal{B}=\mathcal{B}_{\text{in}}^{\text{train}} \cup \mathcal{B}_{\text{wild}}$, where $\mathcal{B}_{\text{in}}^{\text{train}}$ samples from $\mathcal{D}_{\text{in}}^{\text{train}}$ and $\mathcal{B}_{\text{wild}}$ samples from $\mathcal{D}_{\text{wild}}$. 
        \STATE Update $K+1$ classifier $f(\cdot,\bm{\theta})$ based on Eq.(2). 

        \STATE Record losses of unlabeled wild data. $\mathcal{V} \leftarrow \mathcal{V} \cup \{ l_{i} \mid i \in (1, |\mathcal{B}_{\text{wild}}|)\}$

    \ENDFOR 
    
    \STATE Calculate the mean-loss set of wild data $\{u_i\} = \texttt{mean}(\mathcal{V})$, where $\{u_i\} \in \mathbb{R}^{|\mathcal{D}_{\text{wild}}| \times 1}$.
    \STATE Cluster and detect candidate OOD samples set $\mathcal{D}_{\text{out}}$ based on Eq.(3). 

    \STATE \MyComment{\# OOD detector learning module}
    \FOR{$epoch=1$ to $T$}
        \STATE Batch $\mathcal{B} = \mathcal{B}_{\text{in}}^{\text{train}} \cup \mathcal{B}_{\text{out}}$, where $\mathcal{B}_{\text{in}}^{\text{train}}$ samples from $\mathcal{D}_{\text{in}}^{\text{train}}$ and $\mathcal{B}_{\text{out}}$ samples from $\mathcal{D}_{\text{out}}$.
        \STATE Update  $f_\theta$ and $g_\theta$ based on Eq.(4).
    \ENDFOR
\end{algorithmic}
\end{algorithm}

\section{Theoretical Analysis}
\subsection{Mitigation of The Model Bias}
For the first claim in Subsection 3.2, we here provide a theoretical analysis at the gradient level to demonstrate that in our LoD framework, labeled ID data and OOD data in $\mathcal{D}_{\text{wild}}$ can jointly dominate the model learning. Let $N_1$ denote the number of samples in $\mathcal{B}_{\text{in}}$, while $N_2$ and $N_3$ denote the number of samples respectively from IDs and OODs in $\mathcal{B}_{\text{wild}}$. Then Eq.(2) can be rewritten in the following form:
\begin{equation}
\begin{aligned}
    \mathcal{L} &=  \underbrace{\frac{1}{N_1}\sum_{(\bm{x}_i,y_i)\sim\mathcal{B}_{\text{in}}}\ell(\hat{y_i},y_i) + \frac{1}{N_2}\sum_{(\bm{x}_i,y_i)\sim\mathcal{B}_{\text{wild}}}\ell(\hat{y_i},y_{K+1})}_{\text{ID data}} \\ &+ \underbrace{\frac{1}{N_3}\sum_{(\bm{x}_i,y_i)\sim\mathcal{B}_{\text{wild}}}\ell(\hat{y_i},y_{K+1})}_{\text{OOD data}}.
\end{aligned}
\end{equation}
Let $\nabla\mathcal{L}_{N_k} = \frac{1}{N_k}\sum_{i=1}^{N_k}\nabla l(\hat{y_i},y_i), k=1,2,3$, denote the gradient of the corresponding part with respect to the model parameters $\bm{\theta}$. For the OOD samples in $\mathcal{B}_{\text{wild}}$, evidently, they are correctly labeled (label-clean), the model parameters therefore will be updated in the correct gradient direction. 

As for ID samples, they consist of two parts: one part sampled from $\mathcal{D}_{\text{in}}^{\text{train}}$ (\textit{label-clean}), and the other part sampled from $\mathcal{D}_{\text{wild}}$ (\textit{label-noise}). Then the update of the model parameters $\bm{\theta}$ is as follows:
\begin{equation}
    \bm{\theta}^{t+1} = \bm{\theta}^{t} - \eta(\nabla\mathcal{L}_{N_1} + \nabla\mathcal{L}_{N_2}),
\end{equation}
where $t$ denotes the number of steps for model update, and $\eta$ is the learning rate. According to Eq.(6), the update of $\bm{\theta}$ is determined by $(\nabla\mathcal{L}_{N_1} + \nabla\mathcal{L}_{N_2})$. Since $|\mathcal{B}_{\text{in}}^{\text{train}}|>|\mathcal{B}_{\text{wild}}|$ and $|\mathcal{B}_{\text{wild}}|\geq N_2$, we have
$$|\mathcal{B}_{\text{in}}^{\text{train}}|>|\mathcal{B}_{\text{wild}}|\geq N_2.$$
This indicates that correctly labeled ID samples dominate the updating of model parameters, especially when $|\mathcal{B}_{\text{in}}^{\text{train}}|\gg N_2$. In summary, we have the labeled ID data $\mathcal{D}_{\text{in}}^{\text{train}}$ and the OOD data in $\mathcal{D}_{\text{wild}}$ that can jointly dominate the model learning, thus effectively addressing the model-bias issue.

\begin{table*}[]
\centering
\resizebox{0.885\textwidth}{!}{%
\begin{tabular}{@{}lccccccccccccc@{}}
\toprule
\multicolumn{1}{c}{\multirow{3}{*}{Methods}} & \multicolumn{12}{c}{OOD Dataset} & \multirow{3}{*}{ACC} \\ \cmidrule(lr){2-13}
\multicolumn{1}{c}{} & \multicolumn{2}{c}{SVHN} & \multicolumn{2}{c}{Places} & \multicolumn{2}{c}{LSUN-Crop} & \multicolumn{2}{c}{LSUN-Resize} & \multicolumn{2}{c}{Textures} & \multicolumn{2}{c}{Average} &  \\
\multicolumn{1}{c}{} & FPR95  & AUROC & FPR95 & AUROC & FPR95 & AUROC & FPR95 & AUROC & FPR95 & AUROC & FPR95 & AUROC &  \\ \midrule
\multicolumn{14}{c}{$\pi$=0.1} \\
OE (ICLR'19) & 1.57 & 99.63 & 60.24 & 83.43 & 3.83 & 99.26 & 0.93 & 99.79 & 27.89 & 93.35 & 18.89 & 95.09 & 71.65 \\
Energy(w/OE) (NeurIPS'20) & 1.47 & 99.68 & 54.67 & 86.09 & 2.52 & 99.44 & 2.68 & 99.50 & 37.26 & 91.26 & 19.72 & 95.19 & 73.46 \\
WOODS (ICML'22) & 0.12 & 99.96 & 29.58 & 90.60 & 0.11 & 99.96 & 0.07 & 99.96 & 9.12 & 96.65 & 7.80 & 97.43 & 75.22 \\
SAL (ICLR'24) & 0.07 & 99.95 & 3.53 & 99.06 & 0.06 & 99.94 & 0.02 & 99.95 & 5.73 & 98.65 & 1.88 & 99.51 & 73.71 \\
LoD (Ours) & \textbf{0} & \textbf{100} & \textbf{3.34} & \textbf{99.16} & \textbf{0} & \textbf{100} & \textbf{0} & \textbf{100} & \textbf{4.79} & \textbf{98.87} & \textbf{1.63} & \textbf{99.61} & 73.85 \\ \midrule
\multicolumn{14}{c}{$\pi$=0.5} \\
OE (ICLR'19)& 2.86 & 99.05 & 40.21 & 88.75 & 4.13 & 99.05 & 1.25 & 99.38 & 22.86 & 94.63 & 14.26 & 96.17 & 73.38 \\
Energy(w/OE) (NeurIPS'20) & 2.71 & 99.34 & 34.82 & 90.05 & 3.27 & 99.18 & 2.54 & 99.23 & 30.16 & 94.76 & 14.70 & 96.51 & 72.76 \\
WOODS (ICML'22) & 0.17 & 99.80 & 21.87 & 93.73 & 0.48 & 99.61 & 1.24 & 99.54 & 9.95 & 95.97 & 6.74 & 97.73 & 73.91 \\
SAL (ICLR'24) & 0.02 & 99.98 & \textbf{1.27} & 99.62 & 0.04 & 99.96 & 0.01 & 99.99 & 5.64 & 99.16 & 1.40 & 99.74 & 73.77 \\
LoD (Ours) & \textbf{0} & \textbf{100} & 1.53 & \textbf{99.66} & \textbf{0} & \textbf{100} & \textbf{0} & \textbf{100} & \textbf{3.72} & \textbf{99.19} & \textbf{1.05} & \textbf{99.77} & 74.32 \\ \midrule
\multicolumn{14}{c}{$\pi$=0.9} \\
OE (ICLR'19) & 0.84 & 99.36 & 19.78 & 96.29 & 1.64 & 99.57 & 0.51 & 99.75 & 12.74 & 94.95 & 7.10 & 97.98 & 72.02 \\
Energy(w/OE) (NeurIPS'20) & 0.97 & 99.64 & 17.52 & 96.53 & 1.36 & 99.73 & 0.94 & 99.59 & 14.01 & 95.73 & 6.96 & 98.24 & 73.62 \\
WOODS (ICML'22) & 0.05 & 99.98 & 11.34 & 95.83 & 0.07 & 99.99 & 0.03 & 99.99 & 6.72 & 98.73 & 3.64 & 98.90 & 73.86 \\
SAL (ICLR'24) & 0.03 & 99.99 & 2.79 & 99.89 & 0.05 & 99.99 & 0.01 & 99.99 & 5.88 & \textbf{99.53} & 1.75 & \textbf{99.88} & 74.01 \\
LoD (Ours) & \textbf{0} & \textbf{100} & \textbf{0.48} & \textbf{99.90} & \textbf{0} & \textbf{100} & \textbf{0} & \textbf{100} & \textbf{2.78} & 99.41 & \textbf{0.65} & 99.86 & 74.34 \\ \bottomrule
\end{tabular}%
}
\caption{Evaluation results of FPR95$\downarrow$ (\%), AUROC$\uparrow$ (\%) and ACC$\uparrow$ (\%) on standard benchmarks. CIFAR100 is ID, and bold numbers highlight the best results.}
\label{tab:main-ood}
\end{table*}

\begin{table*}[h]
\centering
\resizebox{0.885\textwidth}{!}{%
\begin{tabular}{@{}lcccccccclll@{}}
\toprule
\multicolumn{1}{c}{\multirow{3}{*}{Methods}} & \multicolumn{10}{c}{Dataset} & \multicolumn{1}{c}{\multirow{3}{*}{ACC}} \\ \cmidrule(lr){2-11}
\multicolumn{1}{c}{} & \multicolumn{2}{c}{CIFAR10} & \multicolumn{2}{c}{CIFAR+10} & \multicolumn{2}{c}{CIFAR+50} & \multicolumn{2}{c}{TinyImageNet} & \multicolumn{2}{c}{Average} & \multicolumn{1}{c}{} \\
\multicolumn{1}{c}{} & FPR95 & AUROC & FPR95 & AUROC & FPR95 & AUROC & FPR95 & AUROC & \multicolumn{1}{c}{FPR95} & \multicolumn{1}{c}{AUROC} & \multicolumn{1}{c}{} \\ \midrule
\multicolumn{12}{c}{$\pi=0.1$} \\
OE (ICLR'19) & 30.83 & 94.9 & 11.40 & 97.98 & 22.21 & 95.98 & 82.3 & 75.34 & 36.69 & 91.05 & 91.45 \\
Energy(w/OE) (NeurIPS'20) & 38.36 & 89.85 & 16.40 & 96.51 & 36.18 & 90.49 & 88.48 & 74.30 & 44.86 & 87.79 & 86.98 \\
WOODS (ICML'22) & 32.33 & 93.70 & 22.39 & 95.95 & 22.12 & 95.76 & 74.60 & 78.62 & 37.86 & 91.01 & 92.43 \\
SAL (ICLR'24) & 12.95 & 97.35 & 4.76 & 98.88 & 10.66 & 97.63 & 48.35 & 86.71 & 19.18 & 95.14 & 91.50 \\
LoD (Ours) & \textbf{2.56} & \textbf{99.40} & \textbf{1.50} & \textbf{99.62} & \textbf{1.96} & \textbf{99.39} & \textbf{47.61} & \textbf{91.55} & \textbf{13.41} & \textbf{97.49} & 91.44 \\ \midrule
\multicolumn{12}{c}{$\pi=0.5$} \\
OE (ICLR'19) & 13.77 & 97.68 & 4.08 & 99.09 & 9.80 & 98.27 & 76.13 & 80.62 & 25.95 & 93.92 & 91.82 \\
Energy(w/OE) (NeurIPS'20) & 9.16 & 97.70 & 3.70 & 98.98 & 10.01 & 97.43 & 75.93 & 83.58 & 24.70 & 94.42 & 87.91 \\
WOODS (ICML'22) & 17.89 & 96.64 & 12.50 & 97.69 & 12.68 & 97.68 & 70.60 & 81.42 & 28.42 & 93.36 & 92.53 \\
SAL (ICLR'24) & 12.76 & 97.38 & 4.84 & 98.87 & 10.86 & 97.60 & 48.17 & 86.77 & 19.16 & 95.16 & 91.39 \\
LoD (Ours) & \textbf{2.32} & \textbf{99.47}& \textbf{1.04} & \textbf{99.71} & \textbf{1.96} & \textbf{99.46} & \textbf{46.44} & \textbf{91.52} & \textbf{12.94} & \textbf{97.54} & 91.33 \\ \midrule
\multicolumn{12}{c}{$\pi=0.9$} \\
OE (ICLR'19) & 6.40 & 98.71 & 1.56 & 99.50 & 4.94 & 98.97 & 67.45 & 84.98 & 20.09 & 95.54 & 92.10 \\
Energy(w/OE) (NeurIPS'20) & 2.95 & 98.63 & 1.30 & 99.41 & 2.18 & 98.52 & 58.84 & 88.92 & 16.32 & 96.37 & 89.58 \\
WOODS (ICML'22) & 12.82 & 97.50 & 10.98 & 98.03 & 10.51 & 98.07 & 68.01 & 82.82 & 25.58 & 94.11 & 92.17 \\
SAL (ICLR'24) & 12.95 & 97.34 & 4.30 & 98.91 & 11.11 & 97.56 & 49.19 & 86.66 & 19.39 & 95.12 & 91.41 \\
LoD (Ours) & \textbf{2.19} & \textbf{99.45} & \textbf{1.04} & \textbf{99.77} & \textbf{1.90} & \textbf{99.45} & \textbf{45.24} & \textbf{91.80} & \textbf{12.59} & \textbf{97.62} & 91.50 \\ \bottomrule
\end{tabular}%
}
\caption{Evaluation results of FPR95$\downarrow$ (\%), AUROC$\uparrow$ (\%) and ACC$\uparrow$ (\%) on hard benchmarks, and bold numbers highlight the best results..}
\label{tab:main-osr}
\end{table*}
\subsection{Discriminability between ID and OOD CE Mean-Losses}
As mentioned earlier, the key to our LoD lies in ingeniously transforming the OOD filtering problem into a label-noise learning problem with controllable label-noise ratio, which allows us to leverage the established theoretical foundation of label-noise learning \cite{liu2020early,yue2024ctrl} to ensure the feasibility of our LoD. The work \cite{liu2020early} has shown that the phenomenon in early learning stage, when training with noisy labels, is intrinsic to high-dimensional classification tasks, even in the simplest setting, far from being a peculiar feature of deep neural networks. Therefore, for the second claim in Subsection 3.2, a theoretical analysis of loss gap between ID (label-noise) and OOD (label-clean) data in $\mathcal{D}_{\text{wild}}$ is provided here using a similar setting in \cite{liu2020early}. 

Considering a two class dataset that consists of $n$ independent samples $(\bm{x}_i,y_i)$ drawn from a mixture of two Gaussians in $\mathbb{R}^d$ as follows.
\begin{eqnarray*}
    &&\bm{x} \sim \mathcal{N}(+\bm{v}, \sigma^2\bm{I}_{d\times d}), \ \ \text{if} \ y = +1 \\
    &&\bm{x} \sim \mathcal{N}(-\bm{v}, \sigma^2\bm{I}_{d\times d}), \ \ \text{if} \ y = -1, 
\end{eqnarray*}
where $\bm{v}$ is an arbitrary unit vector in $\mathbb{R}^d$ and $\sigma^2$ is a small constant. Denote $y$ as the true hidden label and $\widetilde{y}$ as the observed label. Assume that for any sample $\bm{x}_i$,
\begin{equation}
    \widetilde{y} = \left\{
    \begin{array}{ll}
       y_i, \ &\text{with probability} \ 1 - \Delta,     \\
       -y_i, \ &\text{with probability} \ \Delta,  
    \end{array}
    \right.
\end{equation}
where $\Delta\in(0,1/2)$ is the label-noise ratio. Let us consider a linear classifier $f(\cdot,\bm{\theta})$ trained by gradient descent on CE loss:
\begin{equation}
    \min_{\bm{\theta\in\mathbb{R}^{2\times d}}} \mathcal{L}_{CE}(\bm{\theta}) := -\frac{1}{n}\sum_{i=1}^n\sum_{j=1}^2y_i\log(f(\bm{x_i},\bm{\theta})).
\end{equation}
In order to correctly classify the true classes well (and not overfit to the noisy labels), the rows of $\bm{\theta}$ should be correlated with the vector $\bm{v}$. Let $\nabla\mathcal{L}_{CE}(\bm{\theta})$ denote the gradient of Eq.(8). According to \cite{liu2020early}, we have the following lemma.
\begin{lemma}[Early-learning succeeds \cite{liu2020early}]
Denote by \{$\bm{\theta}_t$\} the iterates of gradient descent with step size $\eta$. For any $\Delta\in(0,1/2)$, there exists a constant $\delta_{\Delta}$, depending only on $\Delta$, such that if $\delta\leq \delta_{\Delta}$, then with high probability $1-o(1)$, there exists a $T=\Omega(1/\eta)$ such that: for all $t<T$, we have $\|\bm{\theta}_t-\bm{\theta}_0\|\leq1$ and  
$$-\nabla\mathcal{L}_{CE}(\bm{\theta}_t)^T\bm{v}/\|\nabla\mathcal{L}_{CE}(\bm{\theta}_t)\|\geq1/6.$$
\end{lemma}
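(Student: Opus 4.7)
The plan is to argue in three stages: (i) compute the expected gradient at initialization and show it is perfectly aligned with $\bm{v}$; (ii) upgrade this to the empirical gradient via concentration; (iii) propagate alignment and a small drift bound by induction through $T=\Omega(1/\eta)$ GD steps.

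First I would start from $\bm{\theta}_0=\bm{0}$ and compute $\mathbb{E}[-\nabla\mathcal{L}_{CE}(\bm{\theta}_0)]$. Decomposing $\widetilde{y}=y$ with probability $1-\Delta$ and $\widetilde{y}=-y$ with probability $\Delta$, the correctly- and incorrectly-labeled contributions differ only by sign on the signal term, yielding
\begin{equation*}
\mathbb{E}\bigl[-\nabla\mathcal{L}_{CE}(\bm{\theta}_0)\bigr] \;=\; (1-2\Delta)\,c\,\bm{v}\bm{e}^{T}
\end{equation*}
for a positive constant $c$ and a fixed label-space vector $\bm{e}$. Hence the population gradient has unit cosine with $\bm{v}$, and the strict positivity of $1-2\Delta$ leaves margin to absorb the two subsequent error sources.

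Next I would invoke sub-Gaussian concentration. Since $\bm{x}_i-y_i\bm{v}$ is $\sigma$-sub-Gaussian, vector Bernstein bounds imply that, provided $n$ is large and $\sigma\leq\delta_\Delta$ is small enough that the concentration radius is a small fraction of $(1-2\Delta)c$, the empirical initial gradient satisfies
\begin{equation*}
-\nabla\mathcal{L}_{CE}(\bm{\theta}_0)^{T}\bm{v}\bigl/\|\nabla\mathcal{L}_{CE}(\bm{\theta}_0)\| \;\geq\; 1/3
\end{equation*}
with probability $1-o(1)$. The extra slack from $1/3$ to the claimed $1/6$ is reserved for the iterative perturbation. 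For the iterative phase I would couple two invariants on the ball $\{\bm{\theta}:\|\bm{\theta}-\bm{\theta}_0\|\leq 1\}$. The CE Hessian has operator norm bounded by some $L$ there, giving $\|\nabla\mathcal{L}_{CE}(\bm{\theta})-\nabla\mathcal{L}_{CE}(\bm{\theta}_0)\|\leq L$. Shrinking $\delta_\Delta$ further so $L$ is a small fraction of $\|\nabla\mathcal{L}_{CE}(\bm{\theta}_0)\|$ makes the cosine with $\bm{v}$ drop from at least $1/3$ to at worst $1/6$ by a triangle inequality; meanwhile a uniform gradient bound $G$ on the ball gives $\|\bm{\theta}_{t+1}-\bm{\theta}_0\|\leq t\eta G$, so taking $T=\lfloor 1/(\eta G)\rfloor=\Omega(1/\eta)$ preserves $\|\bm{\theta}_t-\bm{\theta}_0\|\leq 1$. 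A joint induction on $t$ closes both invariants.

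The main obstacle I expect is the high-dimensional concentration: the empirical gradient lives in $\mathbb{R}^{2\times d}$ and its off-signal components scale with $\sqrt{d}$, so one must verify that $\delta_\Delta$ and $L$ can be chosen depending only on $\Delta$ and not on $d$. The standard remedy is to decompose every estimate into a signal piece along $\bm{v}$ and an orthogonal noise piece, and to note that the alignment conclusion only needs control of the former plus a uniform norm bound on the latter, both achievable with dimension-free constants once $\sigma$ is small. This mirrors the argument of Liu et al.\ whose conclusion is invoked as the lemma.
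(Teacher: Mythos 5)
First, a point of comparison: the paper offers no proof of this lemma. It is imported verbatim from \cite{liu2020early} and used as a black box to establish Proposition 1; the appendix merely restates it. So the only meaningful comparison is against the original argument of Liu et al., whose overall architecture your stages (i) and (ii) do reproduce faithfully: the expected negative gradient at $\bm{\theta}_0=\bm{0}$ is indeed $\tfrac{1-2\Delta}{2}\,\bm{e}\bm{v}^{T}$ with $\bm{e}=(1,-1)^{T}$, and the passage to the empirical gradient via sub-Gaussian concentration, with the orthogonal noise component of size $O(\sigma\sqrt{d/n})$ handled by a signal/noise decomposition, is the right move (modulo the fact that the admissible $\sigma$ then genuinely depends on $d/n$, not only on $\Delta$ --- a looseness already present in the lemma as restated here).

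Stage (iii), however, has a genuine gap. You propose to bound $\|\nabla\mathcal{L}_{CE}(\bm{\theta})-\nabla\mathcal{L}_{CE}(\bm{\theta}_0)\|\leq L$ on the unit ball via a Hessian bound $L$, and to ``shrink $\delta_\Delta$ further so $L$ is a small fraction of $\|\nabla\mathcal{L}_{CE}(\bm{\theta}_0)\|$.'' This cannot work: the Hessian is $\tfrac{1}{n}\sum_i p_i(1-p_i)\bm{x}_i\bm{x}_i^{T}$, whose operator norm on $\{\|\bm{\theta}\|\leq 1\}$ is $\Theta(1)$ because it is dominated by the \emph{signal} term $\bm{v}\bm{v}^{T}$ with $p_i(1-p_i)$ bounded below away from saturation --- shrinking $\sigma$ does not shrink it. Meanwhile $\|\nabla\mathcal{L}_{CE}(\bm{\theta}_0)\|=\Theta(1-2\Delta)$, which is not under your control and vanishes as $\Delta\to 1/2$. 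So over a displacement of order $1$ the perturbation can be as large as, or much larger than, the gradient itself, and the triangle inequality cannot carry the cosine from $1/3$ down to only $1/6$. The argument that actually closes the induction re-derives the alignment at each $\bm{\theta}_t$ from the structure of the loss rather than perturbing off $\bm{\theta}_0$: while $\|\bm{\theta}_t\|\leq 1$ and $\sigma$ is small, every margin $\bm{\theta}_t^{T}\bm{x}_i$ is $O(1)$, so the predicted probabilities stay bounded away from $0$ and $1$; the clean majority then still contributes a $\bm{v}$-component of the gradient of size $(1-\Delta)\,\mathrm{sig}(-\bm{\theta}_t^{T}\bm{v})-\Delta\,\mathrm{sig}(\bm{\theta}_t^{T}\bm{v})$, which stays bounded below by a constant depending on $\Delta$ only up to a time $T$ chosen (with a $\Delta$-dependent constant in the $\Omega(1/\eta)$) to stop before the noisy minority's pull overtakes it; the orthogonal component is controlled uniformly over the ball by a covering/uniform-concentration step. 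Without replacing your Lipschitz step by this direct analysis, the claimed invariant does not propagate.
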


Lemma 1 indicates that under the condition of label-noise ratio $\Delta$, the model parameters $\bm{\theta}$ update along the proper gradient direction during the early learning stage. This means, during this period, the loss curves of ID (\textit{label-noise}) and OOD (\textit{label-clean}) samples in $\mathcal{D}_{\text{wild}}$ will have significantly different characteristics, with larger loss values and greater fluctuations for ID samples versus smaller loss values and smaller fluctuations for OOD ones. To theoretically analyze this, we have the following proposition.

\begin{proposition}
Let $l_i$ denote the loss value of each sample in $\mathcal{D}_{\text{wild}}$, which is bounded by $R$. $\overline{l_{\text{in}}}=\frac{1}{|\mathcal{D}_{\text{in}}^{\text{wild}}|}\sum_{i\in \mathcal{D}_{\text{in}}^{\text{wild}}}l_i$ and $\overline{l_{\text{out}}}=\frac{1}{|\mathcal{D}_{\text{out}}^{\text{wild}}|}\sum_{i\in \mathcal{D}_{\text{out}}^{\text{wild}}} l_i$ respectively denote the mean losses of ID and OOD sets from unlabeled wild data $\mathcal{D}_{\text{wild}}$, and $n = |\mathcal{D}_{\text{in}}^{\text{wild}}| + |\mathcal{D}_{\text{out}}^{\text{wild}}|$. Under the Lemma 1, with high probability, we have
$$\overline{l_{\text{in}}} - \overline{l_{\text{out}}} \geq 1 - 2e^{-\bm{\theta}^T\bm{v}+\frac{1}{2}\|\bm{\theta}\|^2\delta^2} - \mathcal{O}(\frac{R}{\sqrt{n}}).$$
\end{proposition}
\newcommand{\myurl}{\url{https://github.com/ChuanxingGeng/LoD}}
Proposition 1 demonstrates that the cross-entropy mean losses of ID and OOD samples in $\mathcal{D}_{\text{wild}}$ are distinguishable, just as the two curves shown in Figure \ref{fig:fig3}. The proof is provided in Appendix A of supplementary materials (\myurl).


\section{Experiments}
\subsection{Implementation Details}
Our LoD (\myurl) framework contains two main modules, i.e., loss-difference OOD filtering module and OOD detector learning module. For these two modules, we follow \cite{du2024does,katz2022training} and employ Wide ResNet \cite{zagoruyko2016wide} with 40 layers and widen factor of 2 as the backbone. Moreover, for the loss-difference OOD filtering module, we use stochastic gradient descent with a momentum of 0.9 as the optimizer, and set the initial learning rate to 0.01. We train for 100 epochs using cosine learning rate decay, a batch size of 128 in which $|\mathcal{B}_{\text{in}}^{\text{train}}|:|\mathcal{B}_{\text{wild}}| = 3:1$ , and a dropout rate of 0.3. For the OOD detector learning module, similar to \cite{du2024does}, we load a pre-trained ID classifier and add an additional linear layer which utilize the penultimate-layer features of ID classifier for binary classification. The initial learning rate is set to 0.001, and the remaining training configurations are consistent with those of the former module. All experiments are conducted on a single NVIDIA RTX 3090 GPU. 

\paragraph{Evaluation Metrics.}
Similar to \cite{du2024does,katz2022training}, we adopt the following evaluation metrics: (1) the false positive rate (FPR95) of OOD examples when true positive rate of ID examples is at 95\%, (2) Area Under the Receiver Operating Characteristic curve (AUROC),  and (3) ID classification Accuracy (ACC). 

To comprehensively evaluate our LoD framework, we conduct extensive experiments on both standard benchmarks and hard benchmarks (newly curated in this paper) detailed in the following subsections. Moreover, limited by space, we defer additional experiments in the supplementary materials, including results on CIFAR10 (Appendix C), results on unseen OOD datasets (Appendix D), and results on different network structures (Appendix E).

\subsection{Experiments on Standard Benchmarks}
\paragraph{Datasets.}
For standard benchmarks, we here follow \cite{du2024does,katz2022training}, and choose CIFAR100 as in-distribution (ID) datasets ($\mathbb{P}_{\text{in}}$). For the out-of-distribution (OOD) test datasets  ($\mathbb{P}_{\text{out}}$), we use a diverse collection of natural image datasets including SVHN \cite{netzer2011reading}, Textures \cite{cimpoi2014describing}, Places \cite{zhou2017places}, LSUN-Crop \cite{yu2015lsun} and LSUN-Resize \cite{yu2015lsun}. For the unlabeled wild data ($\mathbb{P}_{\text{wild}}$), we follow \cite{du2024does} and mix datasets by combining a subset of ID data with OOD data under different mixture proportions $\pi \in \{0.1, 0.5, 0.9\}$. Specifically, the ID dataset is split into two equal halves (25,000 images per half), with one half used to mix with an OOD dataset (e.g., SVHN) to create the unlabeled wild data ($\mathbb{P}_{\text{wild}}$).

\paragraph{Main Results.}
We mainly compare our LoD with 4 latest methods using unlabeled wild data including Outlier Exposure (OE) \cite{hendrycks2018deep}, energy-regularization learning (Energy) \cite{liu2020energy}, WOODS \cite{katz2022training}, and SAL \cite{du2024does}. Table \ref{tab:main-ood} presents a comprehensive comparison of different methods on standard benchmarks, highlighting the substantial advantages of our proposed LoD. Across all datasets and $\pi$ values, our approach consistently delivers superior performance, achieving an FPR95 close to 0\%, which is significantly lower than the current SOTA baseline, SAL. Notably, on the most challenging Textures, our method outperforms SAL with substantial reductions in FPR95 by 0.94\%, 1.92\%, and 3.10\% for $\pi=0.1, 0.5, 0.9$, respectively. Moreover, while existing SOTA methods demonstrate strong performance in AUROC, our LoD achieves notable improvements even in this aspect. Importantly, our LoD maintains competitive in-distribution accuracy, matching or surpassing the performance of SOTA methods such as SAL and WOODS across various $\pi$ values.

\subsection{Experiments on Hard Benchmarks}
\paragraph{Datasets.}
In the settings of standard benchmarks, the ID and OOD samples are sourced from different datasets with inherently distinct distributions, which actually indirectly reduces the difficulty of OOD detection. As shown in Table 1, many methods, including ours, have achieved exceptionally high performance. To further demonstrate the advantages of our LoD, we here curate more challenging benchmarks, called hard benchmarks. Different from standard benchmarks, the ID and OOD samples on hard benchmarks come from the same dataset with different classes.


In specific, taking CIFAR10 as an example, we first randomly select 6 classes as ID data and the remaining 4 classes as OOD data. Then, similar to the splitting protocol of standard benchmarks, the training set of 6 ID classes is divided into two halves (15,000 images per half). One half is used as labeled ID data, while the other half is mixed with the data from 4 OOD classes to create the unlabeled wild data. We here select CIFAR10,  CIFAR+10, CIFAR+50, and TinyImageNet \cite{Vaze2022OpenSetRA} to curate the hard OOD benchmarks, and more details can be found in Appendix B of supplementary materials.

\paragraph{Main Results.}
Since the four methods we compared do not conduct the experiments on these benchmarks, we reproduce the results according to the source codes provided by them. 
Table \ref{tab:main-osr} reports the detailed results on hard benchmarks. Across all datasets and under various $\pi$ values, our LoD achieves better FPR95 and AUROC performance compared to existing methods, indicating that its OOD detection has stronger generalization. Notably, compared to the SOTA baseline SAL \cite{du2024does}, our method reduces FPR95 by substantial margins of 5.77\%, 6.22\%, and 6.80\% on average when $\pi=0.1, 0.5, 0.9$, respectively. Especially on CIFAR10, where LoD outperforms SAL more than 10\% in case of FPR95. In particular, on the most challenging TinyImageNet, LoD consistently surpasses SAL by a large margin of 4.84\%, 4.75\%, and 5.14\% in terms of AUROC when $\pi=0.1, 0.5, 0.9$, respectively. Besides, our LoD also maintains competitive ID classification accuracy compared to the SOTA baseline, comprehensively demonstrating the effectiveness of our LoD.
\begin{figure}[h]
  \centering
  \subfigure{\includegraphics[width=4.2cm, height=3cm]{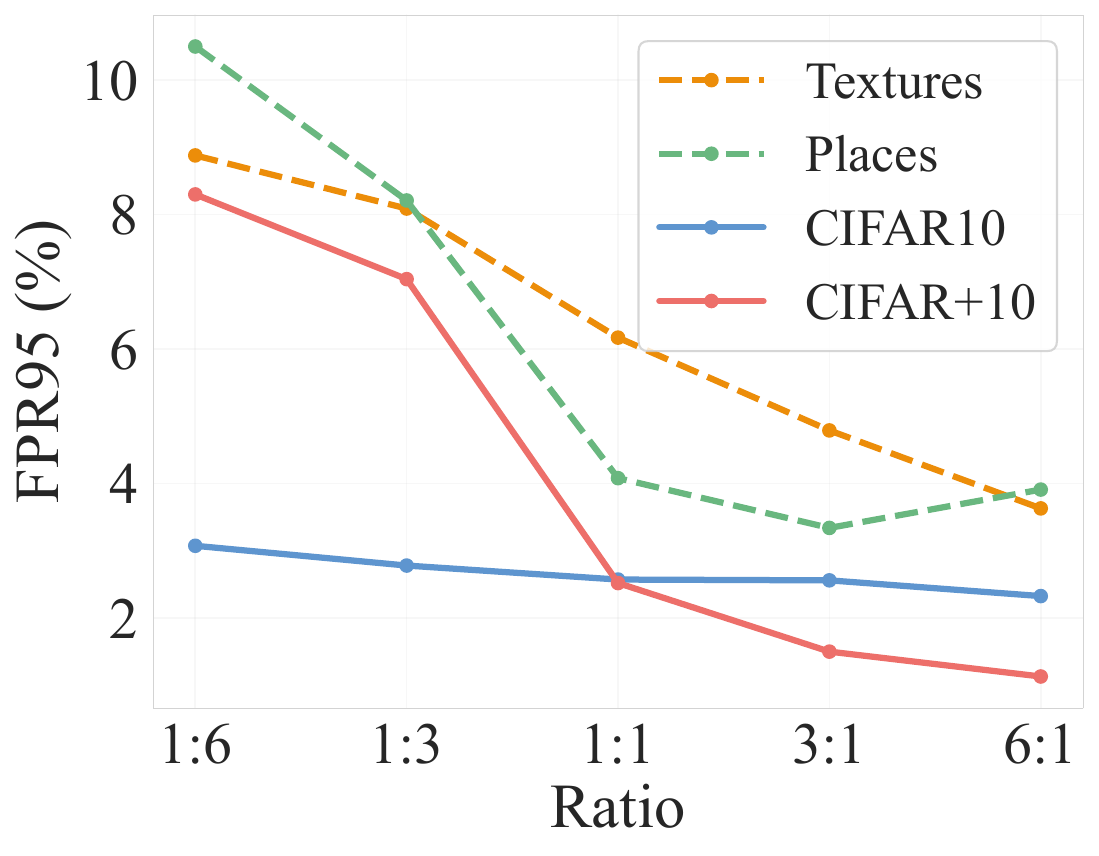}}
  \hfill
  \subfigure{\includegraphics[width=4.2cm, height=3cm]{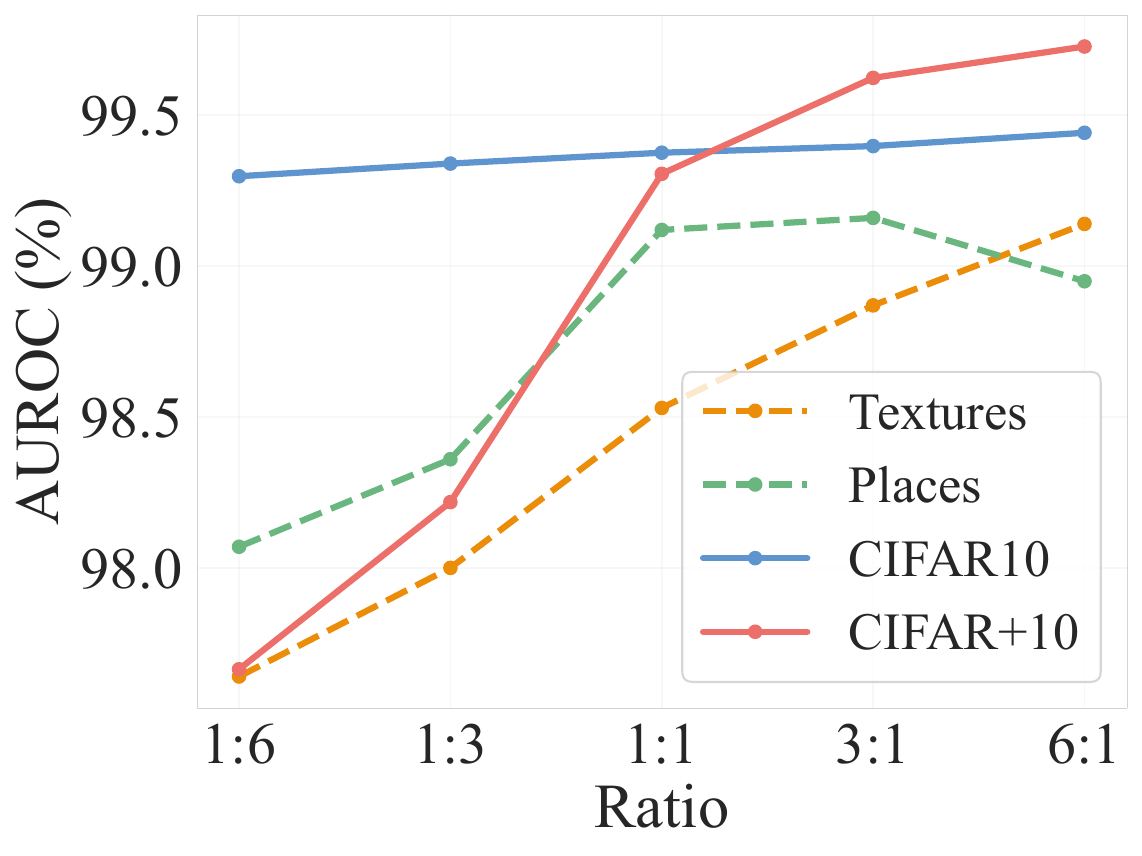}}
  
  \caption{Experiments in different rations ($|\mathcal{B}_{\text{in}}^{\text{train}}|/|\mathcal{B}_{\text{wild}}|$) on standard benchmarks (dashed lines) and hard benchmarks (solid lines). 
  }
  \label{fig:ratio}
\end{figure}


\subsection{Experiments on Different Ratios and Epochs}
\subsubsection{Results on Different Ratios of $|\mathcal{B}_{\text{in}}^{\text{train}}|/|\mathcal{B}_{\text{wild}}|$}
According to Section 4.1, the larger the ratio $|\mathcal{B}_{\text{in}}^{\text{train}}|/|\mathcal{B}_{\text{wild}}|$, the more dominant the labeled ID data in $\mathcal{D}_{\text{in}}^{\text{train}}$ and the OOD data in $\mathcal{D}_{\text{wild}}$ are in model learning, thus leading to better model performance. To verify this, we conduct experiments in different ratios of $|\mathcal{B}_{\text{in}}^{\text{train}}|/|\mathcal{B}_{\text{wild}}|$. Figure \ref{fig:ratio} shows the results. As the ratio increases, the model performance consistently improves across all benchmarks, strongly supporting our claim. Considering computational efficiency, $|\mathcal{B}_{\text{in}}^{\text{train}}|/|\mathcal{B}_{\text{wild}}|$ is set to $3:1$ in all of our experiments.

\begin{figure}[h]
  \centering
  \subfigure{\includegraphics[width=4.2cm, height=3cm]{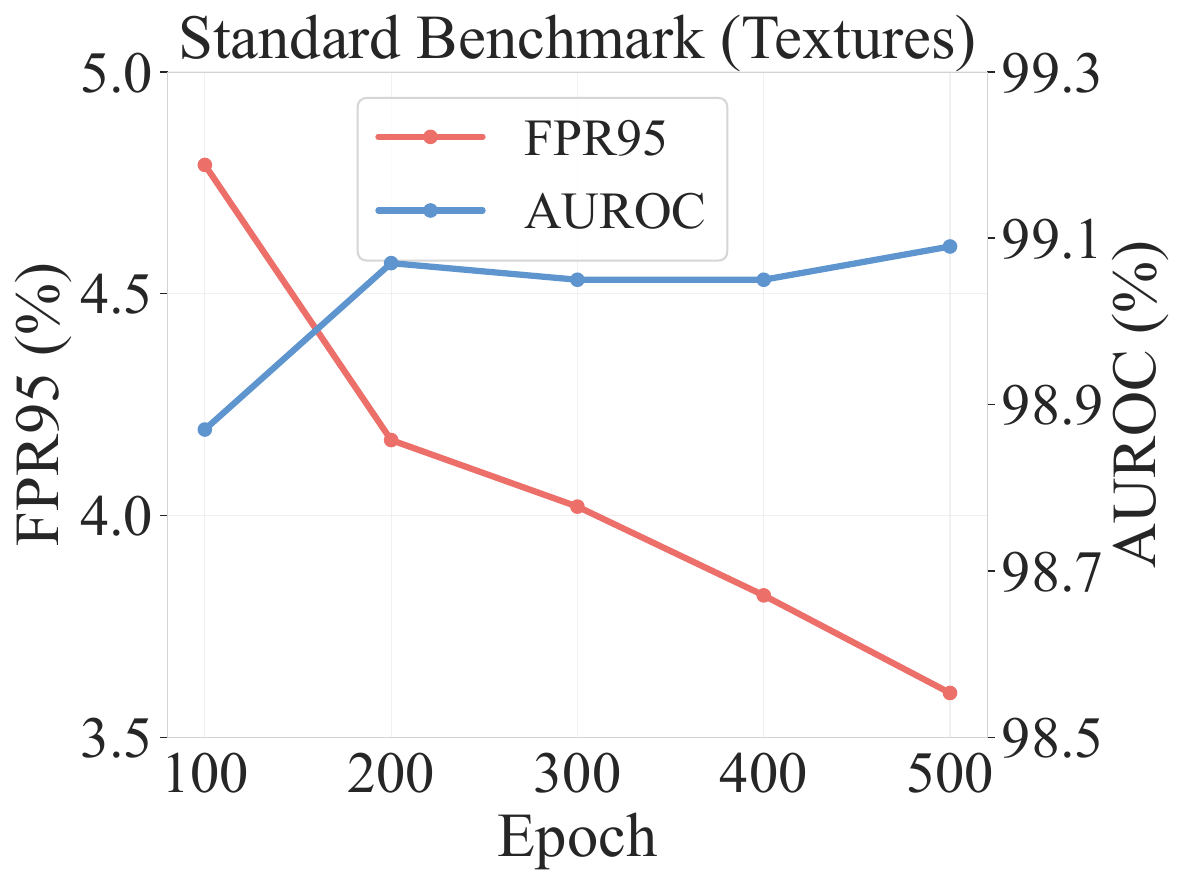}}
  \hfill
  \subfigure{\includegraphics[width=4.2cm, height=3cm]{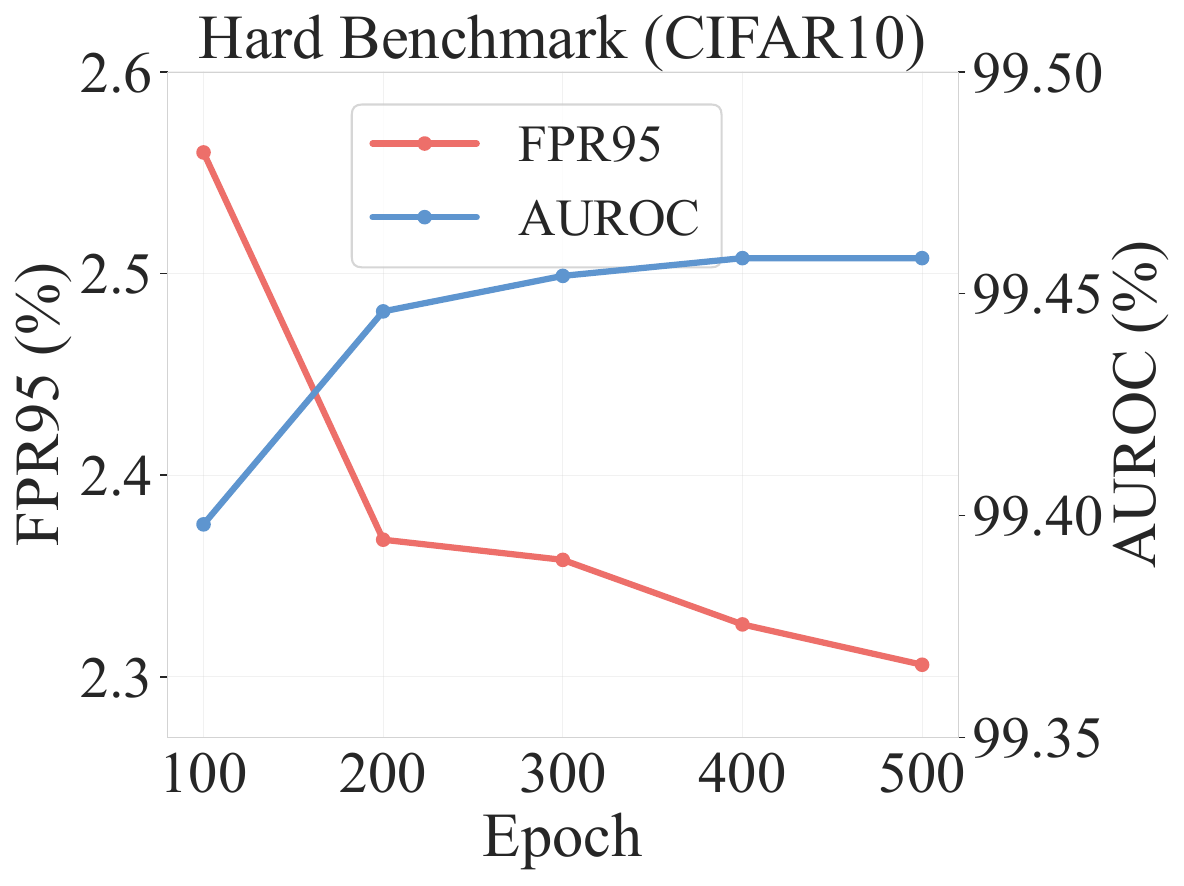}}
  
  \caption{The impacts of training epochs on results respectively
in standard and hard benchmarks.
  }
  \label{fig:epoch}
\end{figure}

\subsubsection{Impact of Epoch in Early-learning Succeeds}
As shown in Proposition 1, the early-learning succeeds is the key to our LoD. To clearly demonstrate the appropriate number of training epochs, we conduct the epoch experiments on standard benchmark (take Textures as an example) and hard benchmark (take CIFAR10 as an example) respectively. Figure \ref{fig:epoch} shows the results, and we can observe a steady performance improvement in our LoD from 100 to 500 training epochs. At first glance, this phenomenon seems inconsistent with the early-learning succeeds in the traditional label-noise learning field, which is usually shorter. However, please note that in our work setting, the label-noise ratio is controlled within an appropriate range by controlling the ratio of $|\mathcal{B}_{\text{in}}^{\text{train}}|/|\mathcal{B}_{\text{wild}}|$, meaning that correctly labeled samples all along dominate the network's learning. This further verifies the operability of LoD due to the long period early-learning succeeds. Considering efficiency issues, the training epochs of all experiments in this paper are set to 100 epochs.

\section{Conclusion}
In this paper, we innovatively propose a loss-difference OOD detection framework by \textit{intentionally label-noisifying} unlabeled wild data, which ingeniously transforms the OOD filtering problem in unlabeled wild data into a label-noise learning problem with controllable label-noise ratio. Importantly, LoD not only effectively addresses the model-bias issue commonly associated with existing methods, but also circumvents the threshold selection dilemma inherent in these approaches. 

\section*{Acknowledgments}
This research was supported in part by the National Natural Science Foundation of China (62106102, 62376126, 62272229), in part by the NSFC-Hong Kong joint collaboration research fund CRS\_HKU703/24, in part by the Hong Kong Scholars Program under Grant XJ2023035, in part by the Fundamental Research Funds for the Central Universities under Grant NS2024058.

\bibliographystyle{named}
\bibliography{ijcai25}

\begin{thebibliography}{}

\bibitem[\protect\citeauthoryear{Abati \bgroup \em et al.\egroup }{2019}]{abati2019latent}
Davide Abati, Angelo Porrello, Simone Calderara, and Rita Cucchiara.
\newblock Latent space autoregression for novelty detection.
\newblock In {\em Proceedings of the IEEE/CVF conference on computer vision and pattern recognition}, pages 481--490, 2019.

\bibitem[\protect\citeauthoryear{Arpit \bgroup \em et al.\egroup }{2017}]{arpit2017closer}
Devansh Arpit, Stanis{\l}aw Jastrz{\k{e}}bski, Nicolas Ballas, David Krueger, Emmanuel Bengio, Maxinder~S Kanwal, Tegan Maharaj, Asja Fischer, Aaron Courville, Yoshua Bengio, et~al.
\newblock A closer look at memorization in deep networks.
\newblock In {\em International conference on machine learning}, pages 233--242. PMLR, 2017.

\bibitem[\protect\citeauthoryear{Behpour \bgroup \em et al.\egroup }{2024}]{behpour2024gradorth}
Sima Behpour, Thang~Long Doan, Xin Li, Wenbin He, Liang Gou, and Liu Ren.
\newblock Gradorth: a simple yet efficient out-of-distribution detection with orthogonal projection of gradients.
\newblock {\em Advances in Neural Information Processing Systems}, 36, 2024.

\bibitem[\protect\citeauthoryear{Bevandi{\'c} \bgroup \em et al.\egroup }{2018}]{bevandic2018discriminative}
Petra Bevandi{\'c}, Ivan Kre{\v{s}}o, Marin Or{\v{s}}i{\'c}, and Sini{\v{s}}a {\v{S}}egvi{\'c}.
\newblock Discriminative out-of-distribution detection for semantic segmentation.
\newblock {\em arXiv preprint arXiv:1808.07703}, 2018.

\bibitem[\protect\citeauthoryear{Chen \bgroup \em et al.\egroup }{2021}]{chen2021atom}
Jiefeng Chen, Yixuan Li, Xi~Wu, Yingyu Liang, and Somesh Jha.
\newblock Atom: Robustifying out-of-distribution detection using outlier mining.
\newblock In {\em Machine Learning and Knowledge Discovery in Databases. Research Track: European Conference, ECML PKDD 2021, Bilbao, Spain, September 13--17, 2021, Proceedings, Part III 21}, pages 430--445. Springer, 2021.

\bibitem[\protect\citeauthoryear{Cimpoi \bgroup \em et al.\egroup }{2014}]{cimpoi2014describing}
Mircea Cimpoi, Subhransu Maji, Iasonas Kokkinos, Sammy Mohamed, and Andrea Vedaldi.
\newblock Describing textures in the wild.
\newblock In {\em Proceedings of the IEEE conference on computer vision and pattern recognition}, pages 3606--3613, 2014.

\bibitem[\protect\citeauthoryear{Deng \bgroup \em et al.\egroup }{2009}]{deng2009imagenet}
Jia Deng, Wei Dong, Richard Socher, Li-Jia Li, Kai Li, and Li~Fei-Fei.
\newblock Imagenet: A large-scale hierarchical image database.
\newblock In {\em 2009 IEEE conference on computer vision and pattern recognition}, pages 248--255. Ieee, 2009.

\bibitem[\protect\citeauthoryear{Djurisic \bgroup \em et al.\egroup }{2022}]{djurisic2022extremely}
Andrija Djurisic, Nebojsa Bozanic, Arjun Ashok, and Rosanne Liu.
\newblock Extremely simple activation shaping for out-of-distribution detection.
\newblock {\em arXiv preprint arXiv:2209.09858}, 2022.

\bibitem[\protect\citeauthoryear{Du \bgroup \em et al.\egroup }{2022}]{du2022unknown}
Xuefeng Du, Xin Wang, Gabriel Gozum, and Yixuan Li.
\newblock Unknown-aware object detection: Learning what you don't know from videos in the wild.
\newblock In {\em Proceedings of the IEEE/CVF Conference on Computer Vision and Pattern Recognition}, pages 13678--13688, 2022.

\bibitem[\protect\citeauthoryear{Du \bgroup \em et al.\egroup }{2024}]{du2024does}
Xuefeng Du, Zhen Fang, Ilias Diakonikolas, and Yixuan Li.
\newblock How does unlabeled data provably help out-of-distribution detection?
\newblock {\em arXiv preprint arXiv:2402.03502}, 2024.

\bibitem[\protect\citeauthoryear{Fang \bgroup \em et al.\egroup }{2024}]{fang2024learnability}
Zhen Fang, Yixuan Li, Feng Liu, Bo~Han, and Jie Lu.
\newblock On the learnability of out-of-distribution detection.
\newblock {\em Journal of Machine Learning Research}, 25, 2024.

\bibitem[\protect\citeauthoryear{Forouzesh \bgroup \em et al.\egroup }{2022}]{forouzesh2022leveraging}
Mahsa Forouzesh, Hanie Sedghi, and Patrick Thiran.
\newblock Leveraging unlabeled data to track memorization.
\newblock {\em arXiv preprint arXiv:2212.04461}, 2022.

\bibitem[\protect\citeauthoryear{Guan \bgroup \em et al.\egroup }{2018}]{guan2018said}
Melody Guan, Varun Gulshan, Andrew Dai, and Geoffrey Hinton.
\newblock Who said what: Modeling individual labelers improves classification.
\newblock In {\em Proceedings of the AAAI conference on artificial intelligence}, volume~32, 2018.

\bibitem[\protect\citeauthoryear{Hendrycks and Gimpel}{2016}]{hendrycks2016baseline}
Dan Hendrycks and Kevin Gimpel.
\newblock A baseline for detecting misclassified and out-of-distribution examples in neural networks.
\newblock {\em arXiv preprint arXiv:1610.02136}, 2016.

\bibitem[\protect\citeauthoryear{Hendrycks \bgroup \em et al.\egroup }{2018}]{hendrycks2018deep}
Dan Hendrycks, Mantas Mazeika, and Thomas Dietterich.
\newblock Deep anomaly detection with outlier exposure.
\newblock {\em arXiv preprint arXiv:1812.04606}, 2018.

\bibitem[\protect\citeauthoryear{Katz-Samuels \bgroup \em et al.\egroup }{2022}]{katz2022training}
Julian Katz-Samuels, Julia~B Nakhleh, Robert Nowak, and Yixuan Li.
\newblock Training ood detectors in their natural habitats.
\newblock In {\em International Conference on Machine Learning}, pages 10848--10865. PMLR, 2022.

\bibitem[\protect\citeauthoryear{Krizhevsky}{2009}]{Krizhevsky2009}
Alex Krizhevsky.
\newblock Learning multiple layers of features from tiny images.
\newblock 2009.

\bibitem[\protect\citeauthoryear{Lee \bgroup \em et al.\egroup }{2017}]{lee2017training}
Kimin Lee, Honglak Lee, Kibok Lee, and Jinwoo Shin.
\newblock Training confidence-calibrated classifiers for detecting out-of-distribution samples.
\newblock {\em arXiv preprint arXiv:1711.09325}, 2017.

\bibitem[\protect\citeauthoryear{Lee \bgroup \em et al.\egroup }{2018}]{lee2018simple}
Kimin Lee, Kibok Lee, Honglak Lee, and Jinwoo Shin.
\newblock A simple unified framework for detecting out-of-distribution samples and adversarial attacks.
\newblock {\em Advances in neural information processing systems}, 31, 2018.

\bibitem[\protect\citeauthoryear{Li \bgroup \em et al.\egroup }{2023}]{li2023disc}
Yifan Li, Hu~Han, Shiguang Shan, and Xilin Chen.
\newblock Disc: Learning from noisy labels via dynamic instance-specific selection and correction.
\newblock In {\em Proceedings of the IEEE/CVF Conference on Computer Vision and Pattern Recognition}, pages 24070--24079, 2023.

\bibitem[\protect\citeauthoryear{Li \bgroup \em et al.\egroup }{2024a}]{li2024certainty}
Huiru Li, Liangxiao Jiang, and Chaoqun Li.
\newblock Certainty weighted voting-based noise correction for crowdsourcing.
\newblock {\em Pattern Recognition}, 150:110325, 2024.

\bibitem[\protect\citeauthoryear{Li \bgroup \em et al.\egroup }{2024b}]{li2024learning}
Tianqi Li, Guansong Pang, Xiao Bai, Wenjun Miao, and Jin Zheng.
\newblock Learning transferable negative prompts for out-of-distribution detection.
\newblock In {\em Proceedings of the IEEE/CVF Conference on Computer Vision and Pattern Recognition}, pages 17584--17594, 2024.

\bibitem[\protect\citeauthoryear{Liang \bgroup \em et al.\egroup }{2017}]{liang2017enhancing}
Shiyu Liang, Yixuan Li, and Rayadurgam Srikant.
\newblock Enhancing the reliability of out-of-distribution image detection in neural networks.
\newblock {\em arXiv preprint arXiv:1706.02690}, 2017.

\bibitem[\protect\citeauthoryear{Lienen and H{\"u}llermeier}{2024}]{lienen2024mitigating}
Julian Lienen and Eyke H{\"u}llermeier.
\newblock Mitigating label noise through data ambiguation.
\newblock In {\em Proceedings of the AAAI Conference on Artificial Intelligence}, volume~38, pages 13799--13807, 2024.

\bibitem[\protect\citeauthoryear{Lin \bgroup \em et al.\egroup }{2024}]{linlearning}
Yexiong Lin, Yu~Yao, and Tongliang Liu.
\newblock Learning the latent causal structure for modeling label noise.
\newblock In {\em The Thirty-eighth Annual Conference on Neural Information Processing Systems}, 2024.

\bibitem[\protect\citeauthoryear{Liu \bgroup \em et al.\egroup }{2020a}]{liu2020early}
Sheng Liu, Jonathan Niles-Weed, Narges Razavian, and Carlos Fernandez-Granda.
\newblock Early-learning regularization prevents memorization of noisy labels.
\newblock {\em Advances in neural information processing systems}, 33:20331--20342, 2020.

\bibitem[\protect\citeauthoryear{Liu \bgroup \em et al.\egroup }{2020b}]{liu2020energy}
Weitang Liu, Xiaoyun Wang, John Owens, and Yixuan Li.
\newblock Energy-based out-of-distribution detection.
\newblock {\em Advances in neural information processing systems}, 33:21464--21475, 2020.

\bibitem[\protect\citeauthoryear{Malinin and Gales}{2018}]{malinin2018predictive}
Andrey Malinin and Mark Gales.
\newblock Predictive uncertainty estimation via prior networks.
\newblock {\em Advances in neural information processing systems}, 31, 2018.

\bibitem[\protect\citeauthoryear{Neal \bgroup \em et al.\egroup }{2018}]{neal2018open}
Lawrence Neal, Matthew Olson, Xiaoli Fern, Weng-Keen Wong, and Fuxin Li.
\newblock Open set learning with counterfactual images.
\newblock In {\em Proceedings of the European Conference on Computer Vision (ECCV)}, pages 613--628, 2018.

\bibitem[\protect\citeauthoryear{Netzer \bgroup \em et al.\egroup }{2011}]{netzer2011reading}
Yuval Netzer, Tao Wang, Adam Coates, Alessandro Bissacco, Baolin Wu, Andrew~Y Ng, et~al.
\newblock Reading digits in natural images with unsupervised feature learning.
\newblock In {\em NIPS workshop on deep learning and unsupervised feature learning}, volume 2011, page~4. Granada, 2011.

\bibitem[\protect\citeauthoryear{Nguyen \bgroup \em et al.\egroup }{2015}]{nguyen2015deep}
Anh Nguyen, Jason Yosinski, and Jeff Clune.
\newblock Deep neural networks are easily fooled: High confidence predictions for unrecognizable images.
\newblock In {\em Proceedings of the IEEE conference on computer vision and pattern recognition}, pages 427--436, 2015.

\bibitem[\protect\citeauthoryear{Sharifi \bgroup \em et al.\egroup }{2025}]{sharifi2025gradient}
Sina Sharifi, Taha Entesari, Bardia Safaei, Vishal~M Patel, and Mahyar Fazlyab.
\newblock Gradient-regularized out-of-distribution detection.
\newblock In {\em European Conference on Computer Vision}, pages 459--478. Springer, 2025.

\bibitem[\protect\citeauthoryear{Song \bgroup \em et al.\egroup }{2022}]{song2022learning}
Hwanjun Song, Minseok Kim, Dongmin Park, Yooju Shin, and Jae-Gil Lee.
\newblock Learning from noisy labels with deep neural networks: A survey.
\newblock {\em IEEE transactions on neural networks and learning systems}, 34(11):8135--8153, 2022.

\bibitem[\protect\citeauthoryear{Sun and Li}{2022}]{sun2022dice}
Yiyou Sun and Yixuan Li.
\newblock Dice: Leveraging sparsification for out-of-distribution detection.
\newblock In {\em European Conference on Computer Vision}, pages 691--708. Springer, 2022.

\bibitem[\protect\citeauthoryear{Sun \bgroup \em et al.\egroup }{2021}]{sun2021react}
Yiyou Sun, Chuan Guo, and Yixuan Li.
\newblock React: Out-of-distribution detection with rectified activations.
\newblock {\em Advances in Neural Information Processing Systems}, 34:144--157, 2021.

\bibitem[\protect\citeauthoryear{Sun \bgroup \em et al.\egroup }{2022}]{sun2022out}
Yiyou Sun, Yifei Ming, Xiaojin Zhu, and Yixuan Li.
\newblock Out-of-distribution detection with deep nearest neighbors.
\newblock In {\em International Conference on Machine Learning}, pages 20827--20840. PMLR, 2022.

\bibitem[\protect\citeauthoryear{Tack \bgroup \em et al.\egroup }{2020}]{tack2020csi}
Jihoon Tack, Sangwoo Mo, Jongheon Jeong, and Jinwoo Shin.
\newblock Csi: Novelty detection via contrastive learning on distributionally shifted instances.
\newblock {\em Advances in neural information processing systems}, 33:11839--11852, 2020.

\bibitem[\protect\citeauthoryear{Vaze \bgroup \em et al.\egroup }{2022}]{Vaze2022OpenSetRA}
Sagar Vaze, Kai Han, Andrea Vedaldi, and Andrew Zisserman.
\newblock Open-set recognition: A good closed-set classifier is all you need.
\newblock {\em the International Conference on Learning Representations}, abs/2110.06207, 2022.

\bibitem[\protect\citeauthoryear{Wang \bgroup \em et al.\egroup }{2022}]{wang2022vim}
Haoqi Wang, Zhizhong Li, Litong Feng, and Wayne Zhang.
\newblock Vim: Out-of-distribution with virtual-logit matching.
\newblock In {\em Proceedings of the IEEE/CVF conference on computer vision and pattern recognition}, pages 4921--4930, 2022.

\bibitem[\protect\citeauthoryear{Wang \bgroup \em et al.\egroup }{2023a}]{wang2023learning}
Qizhou Wang, Zhen Fang, Yonggang Zhang, Feng Liu, Yixuan Li, and Bo~Han.
\newblock Learning to augment distributions for out-of-distribution detection.
\newblock {\em Advances in neural information processing systems}, 36:73274--73286, 2023.

\bibitem[\protect\citeauthoryear{Wang \bgroup \em et al.\egroup }{2023b}]{wang2023out}
Qizhou Wang, Junjie Ye, Feng Liu, Quanyu Dai, Marcus Kalander, Tongliang Liu, Jianye Hao, and Bo~Han.
\newblock Out-of-distribution detection with implicit outlier transformation.
\newblock {\em In International Conference on Learning Representations}, 2023.

\bibitem[\protect\citeauthoryear{Wei \bgroup \em et al.\egroup }{2022}]{wei2022mitigating}
Hongxin Wei, Renchunzi Xie, Hao Cheng, Lei Feng, Bo~An, and Yixuan Li.
\newblock Mitigating neural network overconfidence with logit normalization.
\newblock In {\em International conference on machine learning}, pages 23631--23644. PMLR, 2022.

\bibitem[\protect\citeauthoryear{Yang \bgroup \em et al.\egroup }{2024}]{yang2024generalized}
Jingkang Yang, Kaiyang Zhou, Yixuan Li, and Ziwei Liu.
\newblock Generalized out-of-distribution detection: A survey.
\newblock {\em International Journal of Computer Vision}, 132(12):5635--5662, 2024.

\bibitem[\protect\citeauthoryear{Yu \bgroup \em et al.\egroup }{2015}]{yu2015lsun}
Fisher Yu, Ari Seff, Yinda Zhang, Shuran Song, Thomas Funkhouser, and Jianxiong Xiao.
\newblock Lsun: Construction of a large-scale image dataset using deep learning with humans in the loop.
\newblock {\em arXiv preprint arXiv:1506.03365}, 2015.

\bibitem[\protect\citeauthoryear{Yuan \bgroup \em et al.\egroup }{2024a}]{yuan2024combating}
Shunjie Yuan, Xinghua Li, Yinbin Miao, Haiyan Zhang, Ximeng Liu, and Robert~H Deng.
\newblock Combating noisy labels by alleviating the memorization of dnns to noisy labels.
\newblock {\em IEEE Transactions on Multimedia}, 2024.

\bibitem[\protect\citeauthoryear{Yuan \bgroup \em et al.\egroup }{2024b}]{yuan2024early}
Suqin Yuan, Lei Feng, and Tongliang Liu.
\newblock Early stopping against label noise without validation data.
\newblock In {\em The Twelfth International Conference on Learning Representations}, 2024.

\bibitem[\protect\citeauthoryear{Yue and Jha}{2024}]{yue2024ctrl}
Chang Yue and Niraj~K Jha.
\newblock Ctrl: Clustering training losses for label error detection.
\newblock {\em IEEE Transactions on Artificial Intelligence}, 2024.

\bibitem[\protect\citeauthoryear{Zagoruyko}{2016}]{zagoruyko2016wide}
Sergey Zagoruyko.
\newblock Wide residual networks.
\newblock {\em arXiv preprint arXiv:1605.07146}, 2016.

\bibitem[\protect\citeauthoryear{Zheng \bgroup \em et al.\egroup }{2023}]{zheng2023out}
Haotian Zheng, Qizhou Wang, Zhen Fang, Xiaobo Xia, Feng Liu, Tongliang Liu, and Bo~Han.
\newblock Out-of-distribution detection learning with unreliable out-of-distribution sources.
\newblock {\em Advances in Neural Information Processing Systems}, 36:72110--72123, 2023.

\bibitem[\protect\citeauthoryear{Zhou \bgroup \em et al.\egroup }{2017}]{zhou2017places}
Bolei Zhou, Agata Lapedriza, Aditya Khosla, Aude Oliva, and Antonio Torralba.
\newblock Places: A 10 million image database for scene recognition.
\newblock {\em IEEE transactions on pattern analysis and machine intelligence}, 40(6):1452--1464, 2017.

\bibitem[\protect\citeauthoryear{Zhu \bgroup \em et al.\egroup }{2023}]{zhu2023diversified}
Jianing Zhu, Yu~Geng, Jiangchao Yao, Tongliang Liu, Gang Niu, Masashi Sugiyama, and Bo~Han.
\newblock Diversified outlier exposure for out-of-distribution detection via informative extrapolation.
\newblock {\em Advances in Neural Information Processing Systems}, 36:22702--22734, 2023.

\end{thebibliography}

\appendix
\section{Detailed Proof}
\label{app:proofs}
\setcounter{lemma}{0}
\setcounter{proposition}{0}
To prove Proposition 1, we first reintroduce Lemma 1 from \cite{liu2020early} and Proposition 1 as follows:
\begin{lemma}[Early-learning succeeds]
Denote by \{$\bm{\theta}_t$\} the iterates of gradient descent with step size $\eta$. For any $\Delta\in(0,1/2)$, there exists a constant $\delta_{\Delta}$, depending only on $\Delta$, such that if $\delta\leq \delta_{\Delta}$, then with high probability $1-o(1)$, there exists a $T=\Omega(1/\eta)$ such that: for all $t<T$, we have $\|\bm{\theta}_t-\bm{\theta}_0\|\leq1$ and  
$$-\nabla\mathcal{L}_{CE}(\bm{\theta}_t)^T\bm{v}/\|\nabla\mathcal{L}_{CE}(\bm{\theta}_t)\|\geq1/6.$$
\end{lemma}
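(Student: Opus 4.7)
The plan is to establish the two conclusions of the lemma — the iterate bound $\|\bm{\theta}_t-\bm{\theta}_0\|\le 1$ and the gradient-alignment bound $-\nabla\mathcal{L}_{CE}(\bm{\theta}_t)^T\bm{v}/\|\nabla\mathcal{L}_{CE}(\bm{\theta}_t)\|\ge 1/6$ — simultaneously, by an induction on $t$ in which the two statements reinforce one another: the alignment keeps the gradient direction controlled so the iterates cannot drift far, and the iterate bound keeps $\bm{\theta}_t$ in a compact region where alignment can be proved. I first identify the correct picture: a linear classifier on a near-degenerate Gaussian mixture, noise-attenuated toward $\bm{v}$ by a factor of $(1-2\Delta)>0$.

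First I would compute the population gradient of the noisy CE loss. Conditioning on the clean label $y$ and using $\mathbb{E}[\widetilde y \mid y]=(1-2\Delta)y$, the expected gradient factorizes into a noise-attenuation $(1-2\Delta)$, a softmax-type weight depending on $\bm{\theta}^T\bm{x}$, and $\bm{x}$ itself. On the compact set $\{\bm{\theta}:\|\bm{\theta}-\bm{\theta}_0\|\le 1\}$, I would show that the inner product of $-\mathbb{E}[\nabla\mathcal{L}_{CE}(\bm{\theta})]$ with $\bm{v}$ is bounded below by a positive constant $c_\Delta>0$ depending only on $\Delta$, and that $\|\mathbb{E}[\nabla\mathcal{L}_{CE}(\bm{\theta})]\|$ is bounded above by an explicit $O(1)$ quantity, giving a population-level cosine lower bound with comfortable slack above $1/6$ (say $1/4$).

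Next I would upgrade this to the sample statement via uniform concentration. Because $\bm{x}$ is sub-Gaussian with variance $\sigma^2=\delta^2$ and the softmax weights are $1$-Lipschitz, a standard $\varepsilon$-net argument over the unit ball around $\bm{\theta}_0$, combined with Lipschitz-in-$\bm{\theta}$ control of the gradient map, yields $\sup_{\|\bm{\theta}-\bm{\theta}_0\|\le 1}\|\nabla\mathcal{L}_{CE}(\bm{\theta})-\mathbb{E}[\nabla\mathcal{L}_{CE}(\bm{\theta})]\|=o(1)$ with probability $1-o(1)$. The threshold $\delta_\Delta$ is then chosen so that for $\delta\le\delta_\Delta$ this deviation — together with the slight shrinkage of $\|\nabla\mathcal{L}_{CE}(\bm{\theta})\|$ — leaves cosine bounded below by $1/6$ uniformly on the ball. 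To close the induction, I note that the gradient norm is also bounded above by some $M=O(1)$ on the ball, so gradient descent moves $\bm{\theta}_t$ by at most $\eta M$ per step, and after $T=\lfloor 1/(\eta M)\rfloor=\Omega(1/\eta)$ iterations the trajectory still lies in the ball; the alignment bound keeps holding throughout, which closes the induction.

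The hardest step is the uniform concentration argument. The gradient depends nonlinearly on $\bm{\theta}$ through the softmax and the data are high-dimensional, so a naive pointwise Hoeffding bound is insufficient; I have to combine a Lipschitz-in-$\bm{\theta}$ estimate with a covering-number bound for the unit ball at scale $\varepsilon=o(1)$ while carefully tracking how $\Delta$ and $\delta$ enter all constants. The specific factor $1/6$ (as opposed to $1/2$ in the clean case) is precisely what budgets out enough slack to absorb both the noise-induced attenuation $(1-2\Delta)$ and the uniform concentration error, so a successful write-up must pin these constants down quantitatively rather than symbolically.
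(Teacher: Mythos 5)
The first thing to say is that the paper does not prove this statement at all: Lemma~1 is quoted verbatim from \cite{liu2020early} (it is even labelled ``Early-learning succeeds \cite{liu2020early}'' in the main text) and is used as an imported black box in the proof of Proposition~1. So there is no in-paper argument to compare yours against; the only fair comparison is with the original proof in that reference. Against that benchmark, your skeleton is the right one and broadly matches how the result is actually established: write $\bm{x}_i=y_i\bm{v}+\bm{z}_i$ with $\bm{z}_i\sim\mathcal{N}(\bm{0},\delta^2\bm{I})$, observe that the CE gradient is a residual-weighted sum of the $\bm{x}_i$ in which correctly and incorrectly labelled points pull in opposite directions along $\bm{v}$ with the majority ($1-\Delta>1/2$) pulling the right way, control the $\bm{z}_i$ contribution using the smallness of $\delta$, and close an induction in which the $O(1)$ gradient-norm bound keeps $\|\bm{\theta}_t-\bm{\theta}_0\|\le 1$ for $T=\Omega(1/\eta)$ steps. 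Your variant routes the concentration through the population gradient plus an $\varepsilon$-net over the parameter ball, whereas the original works directly with the empirical decomposition into clean and flipped subsets and concentrates the noise vectors; both are viable, and yours is the more generic (if somewhat heavier) route.

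That said, as a proof your submission is a plan rather than an argument, and the two steps you yourself flag as hardest are exactly the ones that are not executed. First, the claimed $(1-2\Delta)$ attenuation is only clean at initialization, where all sigmoid residuals are equal; once $\bm{\theta}_t$ moves, the clean and flipped points carry \emph{different} residual weights (the mislabeled points are the ones the model fails to fit, so they acquire the \emph{larger} weights), and showing that the net coefficient of $\bm{v}$ stays bounded below by a positive constant throughout the unit ball is precisely the delicate bookkeeping that produces the threshold $\delta_\Delta$ and the constant $1/6$. Second, the uniform concentration bound $\sup_{\|\bm{\theta}-\bm{\theta}_0\|\le 1}\|\nabla\mathcal{L}_{CE}(\bm{\theta})-\mathbb{E}\nabla\mathcal{L}_{CE}(\bm{\theta})\|=o(1)$ is not free in high dimension: an $\varepsilon$-net over a $d$-dimensional ball costs a $\sqrt{d/n}$-type factor, so you must state the asymptotic regime in which ``with high probability $1-o(1)$'' is meant (the original result is proved for $n,d$ growing together), otherwise the claim is not even well posed. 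Until those two steps are carried out quantitatively, the cosine bound of $1/6$ is asserted rather than derived.
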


\begin{proposition}
Let $l_i$ denote the loss value of each sample in $\mathcal{D}_{\text{wild}}$, which is bounded by $R$. $\overline{l_{\text{in}}}=\frac{1}{|\mathcal{D}_{\text{in}}^{\text{wild}}|}\sum_{i\in \mathcal{D}_{\text{in}}^{\text{wild}}}l_i$ and $\overline{l_{\text{out}}}=\frac{1}{|\mathcal{D}_{\text{out}}^{\text{wild}}|}\sum_{i\in \mathcal{D}_{\text{out}}^{\text{wild}}} l_i$ respectively denote the mean losses of ID and OOD sets from unlabeled wild data $\mathcal{D}_{\text{wild}}$, and $n = |\mathcal{D}_{\text{in}}^{\text{wild}}| + |\mathcal{D}_{\text{out}}^{\text{wild}}|$. Under the Lemma 1, with high probability, we have
$$\overline{l_{\text{in}}} - \overline{l_{\text{out}}} \geq 1 - 2e^{-\bm{\theta}^T\bm{v}+\frac{1}{2}\|\bm{\theta}\|^2\delta^2} - \mathcal{O}(\frac{R}{\sqrt{n}}).$$
\end{proposition}
\begin{proof}
    Lemma 1 indicates that under the condition of noise level $\Delta$, the model parameters $\bm{\theta}$ update along the proper gradient direction during the early learning stage. This means, during this period, the loss curves of ID (\textit{label-noise}) and OOD (\textit{label-clean}) samples in test-set will have significantly different characteristics, with larger loss values and greater fluctuations for ID samples versus smaller loss values and smaller fluctuations for OOD ones. Next, we analyze the mean loss gap between ID (label-noise) samples in $\mathcal{D}_{\text{in}}^{\text{wild}}$ and OOD (label-clean) samples in $\mathcal{D}_{\text{out}}^{\text{wild}}$ during this stage. Following \cite{yue2024ctrl}, we adopt sigmoid function as the activation function for the network outputs. For each sample $(\bm{x_i},y_i)$, we have
\begin{eqnarray*}
    &&p(y_i=1) = \text{sig}(\bm{\theta}^T\bm{x_i}) = \frac{1}{1+e^{-\bm{\theta}^T\bm{x_i}}}, \\
    &&p(y_i=-1) = 1 - p(y_i=1).
\end{eqnarray*}
Let $\bm{x} = \bm{v} + \bm{z}_i$, where $\bm{z}_i\sim\mathcal{N}(\bm{0},\sigma^2\bm{I}_{d\times d})$. For each sample $\bm{x_i}\in \mathcal{D}_{\text{out}}^{\text{wild}}$ (label-clean), we use $\log$ for its loss, and have
\begin{equation*}
    l_i(\bm{\theta}) = \log(1+e^{-\bm{\theta}^T(\bm{v}+\bm{z}_i)}) \leq e^{-\bm{\theta}^T(\bm{v}+\bm{z}_i)}.
\end{equation*}
Similarly, for each sample $\bm{x_j}\in \mathcal{D}_{\text{in}}^{\text{wild}}$ (label-noise), we have
\begin{equation*}
    l_j(\bm{\theta}) = \log(1+e^{\bm{\theta}^T(\bm{v}+\bm{z}_i)}) \geq 1 - e^{-\bm{\theta}^T(\bm{v}+\bm{z}_i)}.
\end{equation*}
Taking the expectation on the difference between OOD (label-clean) and ID (label-noise), we have
\begin{equation*}
    \mathbb{E}[l_i(\bm{\theta}) - l_i(\bm{\theta})] = \mathbb{E}[l_i(\bm{\theta})] - \mathbb{E}[l_j(\bm{\theta})] \geq 1 - 2\cdot\mathbb{E}[e^{-\bm{\theta}^T(\bm{v}+\bm{z})}].
\end{equation*}
Note that the term $1 - 2\cdot\mathbb{E}[e^{-\bm{\theta}^T(\bm{v}+\bm{z})}]$ bounds the loss gap between OOD (label-clean) and know-class (label-noise) samples, and it is independent of the label type. Since
\begin{equation}
    \mathbb{E}[e^{-\bm{\theta}^T(\bm{v}+\bm{z})}] = e^{-\bm{\theta}^T\bm{v}}\cdot\mathbb{E}[e^{-\bm{\theta}^T\bm{z}}] = e^{-\bm{\theta}^T\bm{v}}\cdot e^{\frac{1}{2}\|\bm{\theta}\|^2\sigma^2}.
\end{equation}
Eq.(1) indicates that the smaller the $\sigma$ or the projection $\bm{\theta}$ has on $\bm{v}$, the larger the expected loss gap. Interestingly, Lemma 1 ensures that we can obtain a good $\bm{\theta}$ at least within $T$ epochs. Define the mean losses of ID (label-noise) samples  and OOD (label-clean) samples as follows:
\begin{equation*}
    \overline{l_{\text{in}}}=\frac{1}{|\mathcal{D}_{\text{in}}^{\text{wild}}|}\sum_{i\in \mathcal{D}_{\text{in}}^{\text{wild}}}l_i, \ \ \ \overline{l_{\text{out}}}=\frac{1}{|\mathcal{D}_{\text{out}}^{\text{wild}}|}\sum_{i\in \mathcal{D}_{\text{out}}^{\text{wild}}}l_i.
\end{equation*}
By Hoeffding’s Inequality on bounded variables and the Union Bound, with probability $\geq 1 - \delta$, we have
\begin{equation}
    \overline{l_{\text{in}}} \geq \mathbb{E}[\overline{l_{\text{in}}}] - \mathcal{O}(\frac{R}{\sqrt{|\mathcal{D}_{\text{in}}^{\text{wild}}|}}\sqrt{\log\frac{1}{\delta}}).
\end{equation}
and
\begin{equation}
    \overline{l_{\text{out}}} \leq \mathbb{E}[\overline{l_{\text{out}}}] + \mathcal{O}(\frac{R}{\sqrt{|\mathcal{D}_{\text{out}}^{\text{wild}}|}}\sqrt{\log\frac{1}{\delta}}).
\end{equation}
According to Eq.(2) and Eq.(3), we have
$$\overline{l_{\text{in}}} - \overline{l_{\text{out}}} \geq 1 - 2e^{-\bm{\theta}^T\bm{v}+\frac{1}{2}\|\bm{\theta}\|^2\delta^2} - \mathcal{O}(\frac{R}{\sqrt{n}}).$$

\end{proof}

\begin{table*}[h]
\centering
\resizebox{\textwidth}{!}{%
\begin{tabular}{@{}lccccccccccccc@{}}
\toprule
\multicolumn{1}{c}{\multirow{3}{*}{Methods}} & \multicolumn{12}{c}{OOD Dataset} & \multirow{3}{*}{ACC} \\ \cmidrule(lr){2-13}
\multicolumn{1}{c}{} & \multicolumn{2}{c}{SVHN} & \multicolumn{2}{c}{Places} & \multicolumn{2}{c}{LSUN-Crop} & \multicolumn{2}{c}{LSUN-Resize} & \multicolumn{2}{c}{Textures} & \multicolumn{2}{c}{Average} &  \\
\multicolumn{1}{c}{} & FPR95& AUROC& FPR95& AUROC& FPR95& AUROC& FPR95& AUROC& FPR95& AUROC& FPR95& AUROC&  \\ \midrule

\multicolumn{14}{c}{With $\mathbb{P}_{\text{in}}$ only} \\

MSP (ICLR'17) & 48.49 & 91.89 & 59.48 & 88.20 & 30.80 & 95.65 & 52.15 & 91.37 & 59.28 & 88.50 & 50.04 & 91.12 & 94.84 \\
ODIN (ICLR'18)  & 33.35 & 91.96 & 57.40 & 84.49 & 15.52 & 97.04 & 26.62 & 94.57 & 49.12 & 84.97 & 36.40 & 90.61 & 94.84 \\
Mahalanobis (NeurIPS'18) & 12.89 & 97.62 & 68.57 & 84.61 & 39.22 & 94.15 & 42.62 & 93.23 & 15.00 & 97.33 & 35.66 & 93.34 & 94.84 \\
Energy (NeurIPS'20) & 35.59 & 90.96 & 40.14 & 89.89 & 8.26 & 98.35 & 27.58 & 94.24 & 52.79 & 85.22 & 32.87 & 91.73 & 94.84 \\
CSI (NeurIPS'20) & 17.30 & 97.40 & 34.95 & 93.64 & 1.95 & 99.55 & 12.15 & 98.01 & 20.45 & 95.93 & 17.36 & 96.91 & 94.17 \\
ReAct (NeurIPS'21) & 40.76 & 89.57 & 41.44 & 90.44 & 14.38 & 97.21 & 33.63 & 93.58 & 53.63 & 86.59 & 36.77 & 91.48 & 94.84 \\
KNN (ICML'22) & 24.53 & 95.96 & 25.29 & 95.69 & 25.55 & 95.26 & 27.57 & 94.71 & 50.90 & 89.14 & 30.77 & 94.15 & 94.84 \\
KNN+ (ICML'22) & 2.99 & 99.41 & 24.69 & 94.84 & 2.95 & 99.39 & 11.22 & 97.98 & 9.65 & 98.37 & 10.30 & 97.99 & 93.19 \\
DICE (ECCV'22) & 35.44 & 89.65 & 46.83 & 86.69 & 6.32 & 98.68 & 28.93 & 93.56 & 53.62 & 82.20 & 34.23 & 90.16 & 94.84 \\
ASH (ICLR'23) & 6.51 & 98.65 & 48.45 & 88.34 & 0.90 & 99.73 & 4.96 & 98.92 & 24.34 & 95.09 & 17.03 & 96.15 & 94.84 \\ \midrule

\multicolumn{14}{c}{With $\mathbb{P}_{\text{in}}$ and $\mathbb{P}_{\text{wild}}$} \\
OE (ICLR'19) & 0.85 & 99.82 & 23.47 & 94.62 & 1.84 & 99.65 & 0.33 & 99.93 & 10.42 & 98.01 & 7.38 & 98.41 & 94.07 \\
Energy(w/OE) (NeurIPS'20) & 4.95 & 98.92 & 17.26 & 95.84 & 1.93 & 99.49 & 5.04 & 98.83 & 13.43 & 96.69 & 8.52 & 97.95 & 94.81 \\
WOODS (ICML'22) & 0.15 & 99.97 & 12.49 & 97.00 & 0.22 & 99.94 & 0.03 & 99.99 & 5.95 & 98.79 & 3.77 & 99.14 & 94.84 \\
SAL (ICLR'24) & 0.02 & 99.98 & 2.57 & 99.24 & 0.07 & 99.99 & 0.01 & 99.99 & 0.90 & 99.74 & 0.71 & 99.78 & 93.65 \\
LoD (Ours) & \textbf{0} & \textbf{100} & \textbf{1.72} & \textbf{99.52} & \textbf{0} & \textbf{100} & \textbf{0} & \textbf{100} & \textbf{0.66} & \textbf{99.90} & \textbf{0.48} & \textbf{99.88} & 94.06 \\ \bottomrule
\end{tabular}%
}
\caption{Evaluation results of FPR95$\downarrow$ (\%), AUROC$\uparrow$ (\%) and ACC$\uparrow$ (\%) on standard benchmarks. CIFAR10 is ID dataset, and bold numbers highlight the best results.}
\label{tab:ood_cifar10}
\end{table*}

\begin{table*}[h]
\centering
\resizebox{\textwidth}{!}{%
\begin{tabular}{@{}lccccccccccccccc@{}}
\toprule
\multicolumn{1}{c}{\multirow{3}{*}{Methods}} & \multicolumn{14}{c}{OOD Dataset} & \multirow{3}{*}{ACC} \\ \cmidrule(lr){2-15}
\multicolumn{1}{c}{} & \multicolumn{2}{c}{SVHN} & \multicolumn{2}{c}{Places} & \multicolumn{2}{c}{LSUN-Crop} & \multicolumn{2}{c}{LSUN-Resize} & \multicolumn{2}{c}{Textures} & \multicolumn{2}{c}{25K RAND.IMG.} & \multicolumn{2}{c}{Average} &  \\
\multicolumn{1}{c}{} & FPR95& AUROC& FPR95& AUROC& FPR95& AUROC& FPR95& AUROC& FPR95& AUROC& FPR95& AUROC& FPR95& AUROC&  \\ \midrule
\multicolumn{16}{c}{$\pi$=0.1} \\
OE (ICLR'19) & 77.74 & 82.84 & 60.70 & 84.02 & 31.06 & 93.99 & 55.74 & 88.45 & 57.39 & 88.27 & 50.95 & 87.44 & 56.53 & 87.51 & 83.04 \\
Energy(w/OE) (NeurIPS'20) & 55.89 & 90.19 & 49.08 & 88.03 & 22.74 & 94.94 & 34.10 & 93.42 & 39.33 & 90.63 & 48.91 & 88.12 & 40.23 & 91.44 & 90.02 \\
WOODS (ICML'22) & 4.90 & 98.70 & 18.53 & 96.27 & 1.94 & 99.53 & 5.73 & \textbf{98.78} & 17.71 & 96.17 & 10.37 & 96.92 & 9.76 & 97.89 & 94.50 \\
SAL (ICLR'24) & 5.83 & 97.63 & 17.96 & 96.23 & 2.50 & 98.77 & 5.67 & 98.56 & 8.44 & 97.93 & 8.95 & 97.40 & 8.08 & 97.82 & 93.65 \\
LoD (Ours) & \textbf{4.41} & \textbf{98.96} & \textbf{11.82} & \textbf{97.50} & \textbf{1.84} & \textbf{99.56} & \textbf{5.61} & 98.63 & \textbf{4.66} & \textbf{99.10} & \textbf{8.68} & \textbf{97.59} & \textbf{5.67} & \textbf{98.75} & 93.99 \\ \bottomrule
\end{tabular}%
}
\caption{Evaluation results of FPR95$\downarrow$ (\%), AUROC$\uparrow$ (\%) and ACC$\uparrow$ (\%) on unseen datasets. We use CIFAR10 as ID and a subset (25K images) of 300K Random Images as wild OOD data. Bold numbers
highlight the best results}
\label{tab:diff_cifar10}
\end{table*}

\section{Details in Hard Benchmarks}
To further demonstrate the advantages of our LoD, we conduct experiments on curated hard OOD benchmarks including CIFAR10, CIFAR+10, CIFAR+50, and TinyImageNet. The details of these benchmarks are as follows:
\begin{itemize}
    \item \textbf{CIFAR10.} CIFAR10 \cite{Krizhevsky2009} contains 10 classes, where 6 classes are randomly selected as in-distribution (ID) classes, and the remaining 4 classes are used as out-of-distribution (OOD) classes.
    
    \item \textbf{CIFAR+10 \& CIFAR+50.} In this set of experiments, 4 classes from CIFAR10 are randomly selected as ID classes, and 10/50 non-overlapping classes randomly selected from CIFAR100 \cite{Krizhevsky2009} are OOD classes.
    
    \item \textbf{TinyImageNet.} TinyImageNet is a subset derived from ImageNet \cite{deng2009imagenet} with a total of 200 classes, of which 20 classes are randomly selected as ID classes and the rest 180 classes are treated as OOD classes.
\end{itemize}

Please note that, since ID and OOD are randomly divided, to mitigate the effects of randomness, each dataset is evaluated across five distinct "ID/OOD" splits following \cite{neal2018open,Vaze2022OpenSetRA}, and the results are averaged. Moreover, similar to standard benchmarks \cite{katz2022training,du2024does}, we use 70\% of data from the OOD classes as the OOD part of the unlabeled wild data.


\section{Additional Results on CIFAR10}
In this part, we utilize CIFAR10 as the ID dataset to evaluate our LoD under $\pi=0.1$. In addition to the four methods utilizing wild data compared in the main paper, we here also evaluate methods that rely solely on labeled ID data ($\mathbb{P}_\text{in}$ only) including MSP \cite{hendrycks2016baseline}, ODIN \cite{liang2017enhancing}, Mahalanobis \cite{lee2018simple}, Energy \cite{liu2020energy}, CSI \cite{tack2020csi}, ReAct \cite{sun2021react}, KNN and KNN+ \cite{sun2022out}, DICE \cite{sun2022dice} and ASH \cite{djurisic2022extremely}.
The detailed results are presented in Table \ref{tab:ood_cifar10}, which demonstrate that methods trained using both ID and wild data exhibit significantly better performance compared to those trained solely with ID data. Additionally, compared with methods utilizing $\mathbb{P}_\text{wild}$, LoD continues to exhibit superior performance, outperforming other SOTA methods in terms of FPR95 and AUROC metrics. Furthermore, LoD achieves competitive ID classification accuracy, either matching or exceeding the performance of leading SOTA methods such as SAL and WOODS.

\section{Additional Results on Unseen OOD Datasets}
In this part, we follow \cite{du2024does} and evaluate our LoD on unseen OOD datasets, which are different from the OOD data we use in the wild. Table 2 and Table 3 report the results.

In Table 2, we employ CIFAR10 as the ID dataset. As for the wild OOD data, \cite{du2024does} utilizes the full 300K-image dataset. However, we argue that this setting seems inappropriate due to a significant imbalance: the ID data in the wild data contains only 25K images, while the OOD counterpart comprises 300K images--12 times larger than the ID data. Therefore, we randomly sample a subset of 25K images from the 300K as the wild OOD data. The detailed results presented in Table \ref{tab:diff_cifar10} demonstrate that our LoD consistently outperforms SOTA baselines such as SAL and WOODS on the unseen OOD datasets, highlighting the effectiveness of our method.

In Table 3, we employ CIFAR100 as ID data. As for the wild OOD data, we follow \cite{du2024does} and utilize TinyImageNet-crop (TINc)/TinyImageNet-resize (TINr) dataset as the wild OOD data using during training and TINr/TINc as the unseen OOD data during testing. The results in Table \ref{tab:diff_c100} demonstrate the advantages of our LoD.

\begin{table}[h]
\centering
\tabcolsep 2mm
\footnotesize
\begin{tabular}{@{}lcccc@{}}
\toprule
\multicolumn{1}{c}{\multirow{3}{*}{Methods}} & \multicolumn{4}{c}{OOD Dataset} \\ \cmidrule(l){2-5} 
\multicolumn{1}{c}{} & \multicolumn{2}{c}{TINr} & \multicolumn{2}{c}{TINc} \\
\multicolumn{1}{c}{} & FPR95& AUROC& FPR95& AUROC\\ \midrule
STEP (NeurIPS'21) & 72.31 & 74.59 & 48.68 & 91.14 \\
TSL (MM'23) & 57.52 & 82.29 & 29.48 & 94.62 \\
SAL (ICLR'24) & 43.11 & 89.17 & 19.30 & 96.29 \\
LoD (Ours) & \textbf{23.54} & \textbf{92.81} & \textbf{9.67} & \textbf{98.10} \\ \bottomrule
\end{tabular}%
\caption{Evaluation results of FPR95$\downarrow$ (\%), AUROC$\uparrow$ (\%) on unseen datasets. CIFAR100 is ID, and bold numbers
highlight the best results.}
\label{tab:diff_c100}
\end{table}


\section{Additional Results on Different Networks}
To verify the applicability of LoD, the data-centric method, this part conducts experiments on different network structures on
CIFAR10 and CIFAR+10. Table 4 reports the results, and we
can find these networks mentioned here are all suitable for
our LoD. In particular, LoD seems to follow scaling laws:
the larger the network, the better it performs.
\begin{table}[h]
\centering
\resizebox{\columnwidth}{!}{%
\begin{tabular}{@{}lcccccc@{}}
\toprule
\multirow{2}{*}{Networks\scriptsize($\sharp$ params)} & \multicolumn{3}{c}{CIFAR10} & \multicolumn{3}{c}{CIFAR+10} \\
                        & FPR95  & AUROC   & ACC     & FPR95    & AUROC     & ACC      \\ \midrule
WideResNet-40-2 {\scriptsize(2.2M)}   &  2.56   &  99.40  &  96.34  &  1.50    &  99.62 & 97.29   \\
ResNet18 {\scriptsize(11.2M)}          & 2.47    &  99.44  &  96.46  &  0.96    &  99.72 & 97.32   \\
ResNet34 {\scriptsize(21.3M)}          & 2.29    &  99.51  &  96.48  &  0.90    &  99.73 & 97.39   \\ \bottomrule
\end{tabular}%
}
\caption{Evaluation results of FPR95$\downarrow$ (\%), AUROC$\uparrow$ (\%) and ACC$\uparrow$ (\%) on different networks on hard benchmarks.}
\label{tab:backbone}
\end{table}

\begin{table}[h]
\centering
\resizebox{\columnwidth}{!}{%
\begin{tabular}{@{}ccccccc@{}}
\toprule
\multirow{2}{*}{Ratios} & \multicolumn{3}{c}{Places} & \multicolumn{3}{c}{Textures} \\
                        & FPR95 & AUROC   & ACC    & FPR95   & AUROC  & ACC  \\ \midrule
1:6                     & 10.50    & 98.07    & 72.9    & 8.88     & 97.64   & 74.10   \\
1:3                     & 8.21     & 98.36    & 73.14   & 8.09     & 98.00   & 73.58   \\
1:1                     & 4.08     & 99.12    & 73.06   & 6.17     & 98.53   & 74.06   \\
3:1                     & 3.34     & 99.16    & 72.21   & 4.79     & 98.87   & 73.30   \\
6:1                     & 3.91     & 98.95    & 71.38   & 3.63     & 99.14   & 73.22   \\ \bottomrule
\end{tabular}%
}
\caption{Detailed results of FPR95$\downarrow$ (\%), AUROC$\uparrow$ (\%) and ACC$\uparrow$ (\%) across different ratios on standard benchmarks.}
\label{tab:details_ratios_std}
\end{table}

\begin{table}[h]
\centering
\resizebox{\columnwidth}{!}{%
\begin{tabular}{@{}ccccccc@{}}
\toprule
\multirow{2}{*}{Ratios} & \multicolumn{3}{c}{CIFAR10} & \multicolumn{3}{c}{CIFAR+10} \\
                        & FPR95  & AUROC   & ACC     & FPR95    & AUROC     & ACC      \\ \midrule
1:6                     & 3.07    & 99.30   & 96.18   & 8.30     & 97.66     & 97.35   \\
1:3                     & 2.78    & 99.34   & 96.27   & 7.04     & 98.22     & 97.25   \\
1:1                     & 2.57    & 99.38   & 96.28   & 2.52     & 99.31     & 97.25   \\
3:1                     & 2.56    & 99.40   & 96.34   & 1.50     & 99.62     & 97.29   \\
6:1                     & 2.33    & 99.44   & 96.21   & 1.13     & 99.73     & 97.34    \\ \bottomrule
\end{tabular}%
}
\caption{Detailed results of FPR95$\downarrow$ (\%), AUROC$\uparrow$ (\%) and ACC$\uparrow$ (\%) across different ratios on hard benchmarks.}
\label{tab:details_ratios_hard}
\end{table}

\begin{table}[h]
\centering
\resizebox{\columnwidth}{!}{%
\begin{tabular}{@{}ccccccc@{}}
\toprule
\multirow{2}{*}{Ratios} & \multicolumn{3}{c}{Textures} & \multicolumn{3}{c}{CIFAR10} \\
                        & FPR95 & AUROC   & ACC     & FPR95   & AUROC     & ACC      \\ \midrule
100                     & 4.79    & 98.87   & 73.30   & 2.56     & 99.40     & 96.34   \\
200                     & 4.17    & 99.07   & 72.86   & 2.37     & 99.45     & 96.28   \\
300                     & 4.02    & 99.05   & 72.14   & 2.36     & 99.45     & 96.28   \\
400                     & 3.82    & 99.05   & 72.19   & 2.33     & 99.46     & 96.30   \\
500                     & 3.60    & 99.09   & 72.28   & 2.31     & 99.56     & 96.30    \\ \bottomrule
\end{tabular}%
}
\caption{Detailed results of FPR95$\downarrow$ (\%), AUROC$\uparrow$ (\%) and ACC$\uparrow$ (\%) in different training epochs.}
\label{tab:details_epoch}
\end{table}

\section{Detailed Results on Different Ratios $|\mathcal{B}_{\text{in}}^{\text{train}}|/|\mathcal{B}_{\text{wild}}|$}

Table \ref{tab:details_ratios_std} reports the detailed results in different ratios on representative standard benchmark Places and Textures, while Table \ref{tab:details_ratios_hard} reports the detailed results on CIFAR10 and TinyImageNet. As the ratio $|\mathcal{B}_{\text{in}}^{\text{train}}|/|\mathcal{B}_{\text{wild}}|$ increasing, the performance of our method consistently improves.

\section{Detailed Results on the Impact of Epoch in Early-learning Succeeds}
Table \ref{tab:details_epoch} reports the detailed results on the impact of training epochs in early-learning success. The results demonstrate a consistent performance improvement in our LoD model as the number of training epochs increases from 100 to 500.

\end{document}